\definecolor{LightGray}{gray}{1}
\renewcommand*{\backref}[1]{}
\renewcommand*{\backrefalt}[4]{%
\ifcase #1 %
\relax%
\or
(page #2).%
\else
(pages #2).%
\fi
}
\crefname{equation}{Eq.}{Eqs.}
\newcommand{\eg}{\textit{e.g.},~}
\newcommand{\ie}{\textit{i.e.},~}
\newcommand{\vx}{\bm{x}}
\newcommand{\vth}{\bm{\theta}}
\newcommand{\vt}{\bm{\tau}}
\newcommand{\D}{\mathcal{D}}
\newtheorem{property}{Property}
\newtheorem{proposition}{Proposition}
\crefname{property}{Property}{Properties}
\title{Task Arithmetic in the Tangent Space: \\Improved Editing of Pre-Trained Models}
\author{%
  Guillermo Ortiz-Jimenez\thanks{Equal contribution.} \\
    EPFL, Lausanne, Switzerland\\
  \texttt{guillermo.ortizjimenez@epfl.ch} \\
  \And
  Alessandro Favero\footnotemark[1] \\
    EPFL, Lausanne, Switzerland\\
  \texttt{alessandro.favero@epfl.ch} \\
  \And
  Pascal Frossard\\
    EPFL, Lausanne, Switzerland\\
  \texttt{pascal.frossard@epfl.ch}
}
\begin{document}

\maketitle

\begin{abstract}

Task arithmetic has recently emerged as a cost-effective and scalable approach to edit pre-trained models directly in weight space: By adding the fine-tuned weights of different tasks, the model's performance can be improved on these tasks, while negating them leads to task forgetting. Yet, our understanding of the effectiveness of task arithmetic and its underlying principles remains limited. We present a comprehensive study of task arithmetic in vision-language models and show that \textit{weight disentanglement} is the crucial factor that makes it effective. This property arises during pre-training and manifests when distinct directions in weight space govern separate, localized regions in function space associated with the tasks. Notably, we show that fine-tuning models in their tangent space by linearizing them amplifies weight disentanglement. This leads to substantial performance improvements across multiple task arithmetic benchmarks and diverse models. Building on these findings, we provide theoretical and empirical analyses of the neural tangent kernel (NTK) of these models and establish a compelling link between task arithmetic and the spatial localization of the NTK eigenfunctions. Overall, our work uncovers novel insights into the fundamental mechanisms of task arithmetic and offers a more reliable and effective approach to edit pre-trained models through the NTK linearization.\looseness=-1

\end{abstract}

\section{Introduction}

Pre-trained models play a pivotal role in contemporary machine learning systems. However, to enhance performance on downstream tasks~\cite{zhuang2020comprehensive,ilharco2022patching,ilharco2023task}, align them with human preferences~\cite{ouyang2022training,lu2022quark,ribeiro2022adaptive,sparrow}, and increase robustness~\cite{wortsman2021robust,shibani2021editing,ortizjimenez2021optimism}, they often necessitate further editing. Traditional model editing methods rely on costly joint fine-tuning across multiple tasks~\cite{zhuang2020comprehensive} and human-feedback~\cite{ouyang2022training}, which constrain scalability and democratization. Furthermore, enhancing downstream task performance typically degrades the model's pre-training performance or \textit{zero-shot} accuracy~\cite{mccloskey1989catastrophic,french1999catastrophic,wortsman2021robust}.\looseness=-1

Recent research has introduced cost-effective and scalable model editing techniques that try to preserve the pre-trained model behavior by acting on the model weights through \textit{task arithmetic} or weight interpolation techniques~\cite{ilharco2023task,ilharco2022patching,wortsman2021robust,wortsman2022model,ainsworth2022git,frankle2020linear,don2022cold,matena2021merging,li2022branch,singh2020fusion,izmailov2018averaging,diwa,ratatouille,ties}, thus circumventing expensive joint fine-tuning over multiple tasks. These methods hinge on the observation that arithmetic operations between fine-tuned weights often produce analogous functional behaviors~\cite{ilharco2023task}. For example, summing the relative weight components of a model between pre-training and fine-tuning on two separate tasks results in a new multi-task model with improved performance on both tasks. Similarly, subtracting a task's relative component can lead to the model forgetting that task.\looseness=-1

Despite these advancements, the understanding of task arithmetic's underlying principles and its general effectiveness remains limited. Specifically, a comprehensive understanding of how these techniques affect a model's internal representations and the necessary conditions to make it reliable is lacking. This knowledge gap can undermine the adoption of these techniques, as it erodes their trustworthiness in real-world applications. In addition, reducing this gap could help us improve them even further.\looseness=-1

To address these challenges, we present a systematic study of task arithmetic in contrastively pre-trained vision-language models (\ie CLIP~\cite{radford2021learning}), offering novel insights into its underlying mechanisms and introducing new approaches which enhance the performance of pre-trained models edited through task arithmetic. Specifically, we probe the hypothesis presented in \citet{wortsman2021robust} that task arithmetic is possible thanks to the fact that models inherently operate in a linear regime, where their behavior is dictated by the finite-width neural tangent kernel (NTK)~\cite{jacot2018neural,chizat2019lazy}. 

\begin{figure}[t]
    \centering
    \includegraphics[width=\textwidth]{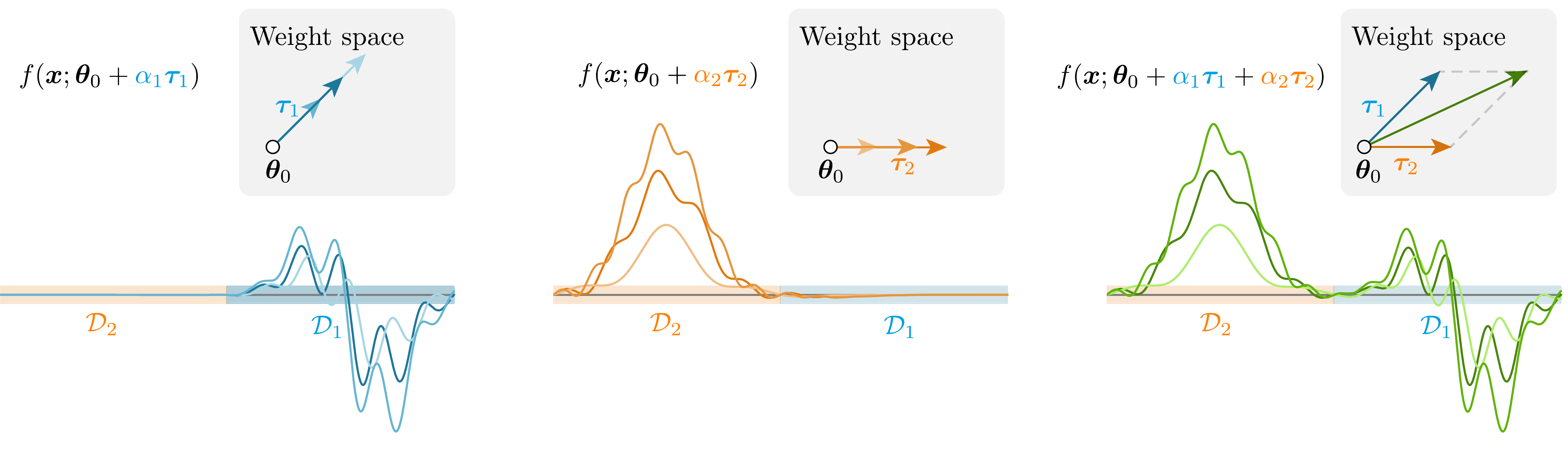}
    \vspace{-2em}
    \caption{\textbf{Illustration of weight disentanglement}, where distinct directions in the weight space, $\vt_t$, are associated with localized areas of the input space, $\D_t$. This allows a model, $f$, to manipulate these areas independently by adding linear combinations of $\vt_t$'s to a pre-trained checkpoint $\vth_0$.\looseness=-1}
    \label{fig:sketch}
    \vspace{-1em}
\end{figure}

Our study reveals that linearized CLIP models exhibit significantly improved task arithmetic performance with respect to their nonlinear counterparts (see~\Cref{tab:task_addition,tab:task_negation}), but also that the NTK cannot fully account for the task arithmetic abilities of pre-trained non-linear models. Indeed, we show that the sole requirement for task arithmetic is actually \textit{weight disentanglement}, where distinct directions in weight space correspond to changes of the network in disjoint regions of the input space (see~\Cref{fig:sketch}). This allows a model to perform task arithmetic by independently manipulating these weight directions.\looseness=-1

Notably, we show that fine-tuning models in their tangent space by linearizing them amplifies weight disentanglement, leading to substantial performance improvements across multiple task arithmetic benchmarks and models. However, although weight disentanglement is stronger in the tangent space, it is also present in non-linear models. We demonstrate that weight disentanglement of semantically meaningful tasks is an emergent property of pre-training, as it is absent at random initialization.\looseness=-1

In particular, our main contributions are as follows:
\begin{itemize}
\item We formalize the notion of task arithmetic introduced in \citet{ilharco2023task} as \Cref{prop:task_arithmetic}, allowing us to reason quantitatively about it.
\item We show that task arithmetic in non-linear models cannot be explained by their NTK, and introduce the concept of weight disentanglement as the necessary condition to enable it.\looseness=-1
\item We propose to linearize models as a way to enhance weight disentanglement and improve task arithmetic. Doing so, we achieve up to $5.8$ points more and $13.1$ points less in accuracy on task addition and task negation, respectively, on several vision-language benchmarks.\looseness=-1
\item We link weight disentanglement in linearized models to spatial localization of the kernel eigenfunctions and validate this prediction numerically in pre-trained transformer models.\looseness=-1
\item Finally, we show that weight disentanglement is an emergent property of pre-training.
\end{itemize}

Overall, our work delivers new insights into the fundamental mechanisms of task arithmetic, facilitating more reliable and scalable model editing. Our findings suggest that linearized fine-tuning of pre-trained models warrants further investigation, with the potential for substantial impact on effective model editing. These insights can foster the development of more efficient and precise model editing techniques, empowering practitioners to adapt pre-trained models to a broader range of tasks.\looseness=-1

\section{Notation and problem statement}\label{sec:notation}

Let $f:\mathcal{X}\times\Theta\to\mathcal{Y}$ be a neural network taking inputs $\vx\in\mathcal{X}$ and parameterized by a set of weights $\vth\in\Theta$. We will assume $\mathcal{X} \subseteq \mathbb{R}^d$, $\Theta \subseteq\mathbb{R}^m$ and $\mathcal{Y}\subseteq\mathbb{R}^c$. Consider $T$ tasks, with every task $t$ consisting of a triplet $(\mathcal{D}_t, \mu_t, f^\star_t)$ where $\mathcal{D}_t\subseteq\mathcal{X}$ is a data support (\eg ImageNet~\cite{deng2009imagenet} images), $\mu_t$ an input distribution such that $\operatorname{supp}(\mu_t)=\mathcal{D}_t$, and $f^\star_t:\mathcal{D}_t\to\mathcal{Y}$ a target function (\eg labels). In practice, each task is identified with a training set $\{(\vx_\nu,f^\star_t(\vx_\nu))\}_{\nu\in[n_t]}$ with $\vx\sim\mu_t$, that is used to fine-tune the models starting from the pre-trained weights $\vth_0$ and obtain the fine-tuned weights $\vth^\star_t$.

\paragraph{Task arithmetic.} Let the \textit{task vector} of task $t$ be the difference between the fine-tuned and the pre-trained weights, \ie $\vt_t = \vth^\star_t - \vth_0$. The following property formalizes the notion of task arithmetic introduced in \citet{ilharco2023task}, where the authors observed that the accuracies of pre-trained models on different datasets can be modified independently through the addition or removal of task vectors.\looseness=-1

\begin{property}[Task arithmetic]\label{prop:task_arithmetic}
    Consider a set of task vectors $\mathcal{T} = \{\vt_t\}_{t \in [T]}$ with associated non-intersecting task supports $\mathcal{D}=\{\mathcal{D}_t \subset \mathcal{X}\}_{t \in [T]}$, i.e., $\forall t,t'$, if $t\neq t'$ then $\D_t\cap\D_{t'}=\varnothing$. We say a network $f$ satisfies the task arithmetic property around $\vth_0$ with respect to $\mathcal{T}$ and $\D$ if
\begin{equation}
    f\left(\vx;\vth_0+\sum_{t=1}^T \alpha_t\,\vt_t\right)=\begin{cases}
    f(\vx;\vth_0+\alpha_t\,\vt_t) & \vx\in\D_t \\
    f(\vx;\vth_0) & \vx\notin\bigcup_{t=1}^T\D_t    \end{cases}
    \label{eq:task_arithmetic}
\end{equation}
with $(\alpha_1,\dots,\alpha_T) \in \mathcal{A}\subseteq \mathbb{R}^T$.
\end{property}

In short, a model satisfies \Cref{prop:task_arithmetic} if adding $\vt_t$ does not modify the output of the model outside $\D_t$.\looseness=-1 

\paragraph{Neural tangent kernel.} Around the initialization weights $\vth_0$, a neural network can be approximated with a first-order Taylor expansion:
\begin{equation}
f(\vx;\vth)\approx f(\vx;\vth_0)+(\vth-\vth_0)^\top\nabla_{\vth}f(\vx;\vth_0).\label{eq:linearization}
\end{equation}
This approximation is equivalent to a kernel predictor with a kernel known as the \textit{neural tangent kernel} (NTK) \cite{jacot2018neural}, $k_{\rm NTK}(\vx,\,\vx') = \nabla_{\vth} f(\vx;\vth_0)^\top \nabla_{\vth} f(\vx';\vth_0)$, and defines a neural tangent space in which the relationship between weights and functions is linear. Remarkably, as the network width approaches infinity, \cref{eq:linearization} becomes exact and remains valid throughout training~\cite{jacot2018neural,arora2019exact, lee2019wide}.\looseness=-1

However, this linear approximation is often invalid at finite widths, as the evolution of parameters during training is inadequately captured by \cref{eq:linearization}. In such cases, training occurs in a \textit{non-linear regime}. Conversely, often during fine-tuning, parameter evolution in many pre-trained models is frequently minimal, meaning that training does not exit the tangent space and \cref{eq:linearization} closely approximates the network behavior~\cite{malladi2022kernel,ortizjimenez2021linear,zancato2020predicting,deshpande2021linearized,yuce2022inrs}. In such cases, training occurs in a \textit{linear regime}.\looseness=-1

\section{Task arithmetic is not a consequence of linear fine-tuning} \label{sec:not_ntk}

The objective of this work is to understand the conditions that enable task arithmetic in deep neural networks. Previous studies hypothesized that task arithmetic results from fine-tuning in the linear regime~\cite{wortsman2021robust,ilharco2023task,wortsman2022model}, as linear weight combinations correspond to similar output function combinations. However, we will now demonstrate that CLIP models do not fine-tune in the linear regime and we therefore need other ways to explain task arithmetic.\looseness=-1

In general, if a pre-trained network $f(\cdot\,; \, \vth_0)$ demonstrates \emph{kernel behavior} during fine-tuning -- \ie fine-tuning occurs in the linear regime -- the following property must be satisfied \cite{malladi2022kernel}:\looseness=-1
\begin{property}[Post-hoc linearization]\label{prop:posthoc}
The change in the network output after training can be approximated by its first-order Taylor expansion, \ie $f(\vx;\vth^\star)-f(\vx;\vth_0)\approx(\vth^\star-\vth_0)^\top\nabla_{\vth}f(\vx;\vth_0)$.
\end{property}
In simple terms, the approximation of the network in the tangent space around initialization must hold after fine-tuning. To test this, we evaluate the performance of the \textit{post-hoc} linearized version of $f$, $f_{\rm lin}$. That is, we apply the fine-tuned task vectors $\vt=\vth^\star-\vth_0$ to the linear approximation of $f$ at $\vth_0$, \ie
\begin{equation}\label{eq:f_lin}
    f_\text{lin}(\vx;\vth_0+\vt)=f(\vx;\vth_0)+\vt^\top\nabla_{\vth}f(\vx;\vth_0),
\end{equation}
and we check whether $f_\text{lin}(\cdot\,;\,\vth^\star)$ performs similarly to $f(\cdot\,;\,\vth^\star)$\footnote{The code to reproduce our experiments can be found at \url{https://github.com/gortizji/tangent_task_arithmetic}.}.

\begin{table}[t]
\caption{\textbf{Task addition.} Average absolute (\%) and normalized accuracies (\%) of different CLIP ViTs edited by adding the sum of the task vectors of $8$ tasks. We report results for the non-linear and linearized models of \Cref{sec:not_ntk,sec:enhancing} normalizing performance by their single-task accuracies.\looseness=-1}\label{tab:task_addition}

\vspace{0.5em}
\centering

\begin{tabular}{@{}lcc|cc|cc@{}}
\toprule
\multirow{2}{*}{Method} & \multicolumn{2}{c|}{ViT-B/32} & \multicolumn{2}{c|}{ViT-B/16} & \multicolumn{2}{c}{ViT-L/14}  \\
                 & Abs. {\small ($\uparrow$)} & Norm. {\small ($\uparrow$)} & Abs. {\small ($\uparrow$)} & Norm. {\small ($\uparrow$)} & Abs. {\small ($\uparrow$)} & Norm. {\small ($\uparrow$)} \\ \midrule
Pre-trained {\small \hspace{2.1em}$f(\cdot\,;\,\vth_0)$}      & 48.4     & --       & 55.2     & --       & 64.4     & --       \\ \midrule
Non-lin. FT {\small \hspace{.7em} $f(\cdot\,;\,\vth_0+\vt)$}   & 71.4     & 76.5       & 75.5     & 80.0       & 85.1     & 88.8       \\
Post-hoc lin. {\small $f_{\rm lin}(\cdot\,;\,\vth_0+\vt)$}  & 57.1     & 81.9       & 65.0     & 85.2       & 75.2     & 90.0       \\
\midrule
Linear. FT  {\small \hspace{.5em}$f_{\rm lin}(\cdot\,;\,\vth_0+\vt_{\text{lin}})$}  & \textbf{76.5} & \textbf{85.4} & \textbf{81.3} & \textbf{86.0} & \textbf{88.5} & \textbf{93.5} \\
\bottomrule
\end{tabular}
\vspace{-0.5cm}
\end{table}

\begin{table}[t]
\caption{\textbf{Task negation.} Minimum accuracy ($\%$) of different CLIP ViTs edited by negating a task vector from a target task while retaining $95\%$ of their performance on the control task. We report average performances over eight tasks on non-linear and linearized models as introduced in \Cref{sec:not_ntk,sec:enhancing}.\looseness=-1}\label{tab:task_negation}
\vspace{0.5em}
\centering
\begin{tabular}{@{}lcc|cc|cc@{}}
\toprule
\multirow{2}{*}{Method} & \multicolumn{2}{c|}{ViT-B/32} & \multicolumn{2}{c|}{ViT-B/16} & \multicolumn{2}{c}{ViT-L/14}  \\
                 & Targ. {\small ($\downarrow$)} & Cont. {\small ($\uparrow$)} & Targ. {\small ($\downarrow$)} & Cont. {\small ($\uparrow$)} & Targ. {\small ($\downarrow$)} & Cont. {\small ($\uparrow$)} \\ \midrule
Pre-trained {\small \hspace{2.1em}$f(\cdot\,;\,\vth_0)$}       & 48.4          & 63.4          & 55.2          & 68.3          & 64.4          & 75.5       \\ \midrule
Non-lin. FT {\small \hspace{.7em} $f(\cdot\,;\,\vth_0-\vt)$}  & 24.0          & 60.7 & 19.2          & 64.6          & 18.0          & \textbf{72.5}      \\
Post-hoc lin. {\small $f_{\rm lin}(\cdot\,;\,\vth_0-\vt)$}  &   14.8  &  60.3      &   \textbf{10.8}   &   \textbf{64.8}     &   12.1   &  71.8       \\
\midrule
Linear. FT  {\small \hspace{.5em}$f_{\rm lin}(\cdot\,;\,\vth_0-\vt_{\text{lin}})$}  & \textbf{10.9} & \textbf{60.8}   & 11.3 & \textbf{64.8} & \textbf{7.9} & \textbf{72.5} \\
\bottomrule
\end{tabular}
\end{table}

\begin{wrapfigure}[18]{r}{0.5\textwidth}
    \centering
    \vspace{-0.1cm}
    \includegraphics[width=0.48\textwidth]{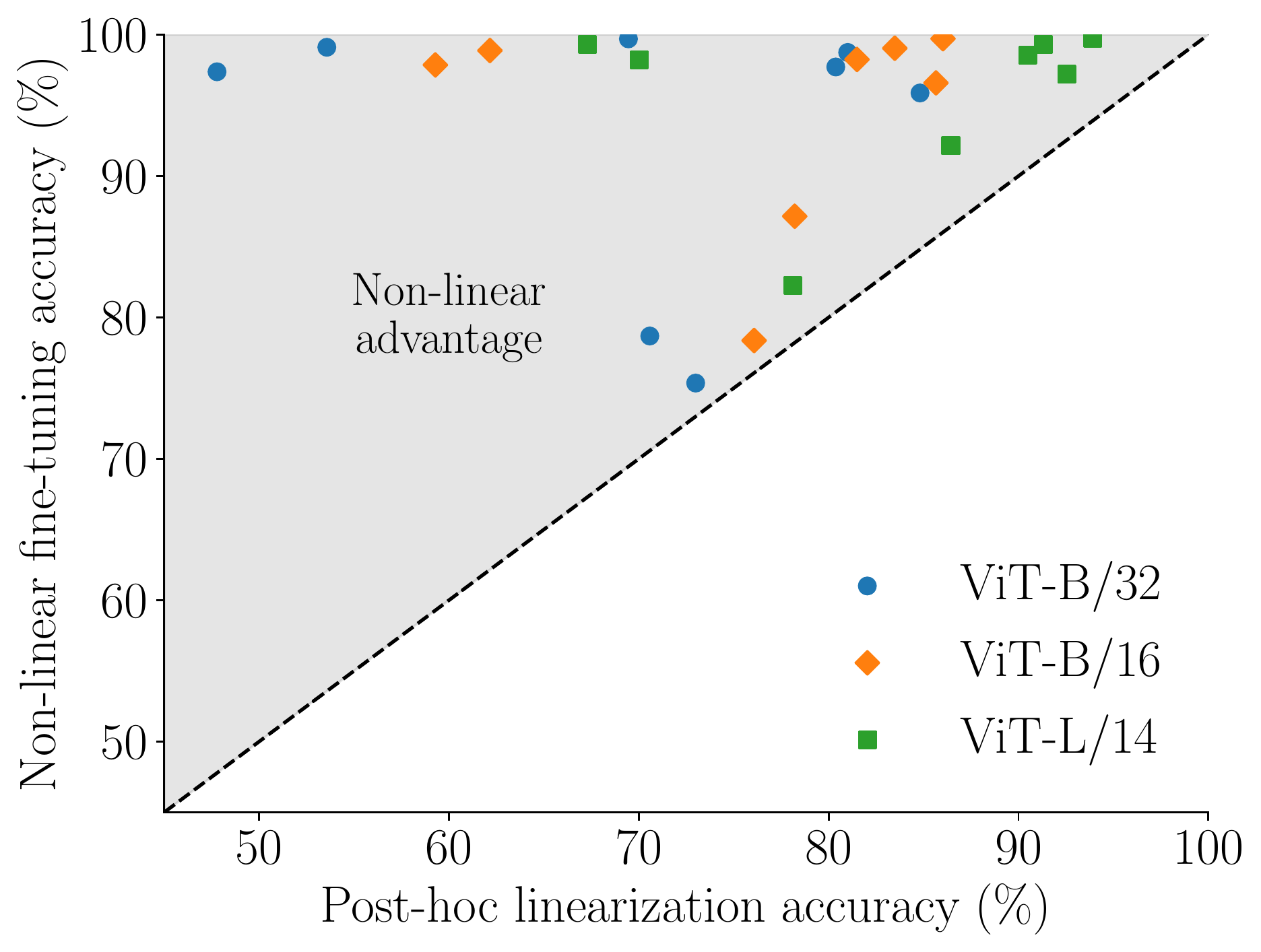}
    \caption{\textbf{Non-linear advantage.} Single-task accuracies of non-linearly fine-tuned models $f(\cdot\,;\,\vth^\star)$ and their \textit{post-hoc} linearization $f_{\rm lin}(\cdot\,;\,\vth^\star)$. Markers represent different ViTs.\looseness=-1} 
    \label{fig:scatter_comp_advantages}
\end{wrapfigure}
The results in \Cref{fig:scatter_comp_advantages} indicate that CLIP models do not exhibit a kernel behavior. Specifically, we fine-tune (FT) several CLIP pre-trained Vision Transformers (ViTs)~\cite{dosovitskiy2021an} of different sizes following the same setup as \citet{ilharco2023task} on $8$ tasks: Cars~\cite{cars}, DTD~\cite{dtd}, SUN397~\cite{sun397}, EuroSAT~\cite{eurosat}, GTSRB~\cite{gtsrb}, MNIST~\cite{lecun1998mnist}, SVHN~\cite{svhn} and RESISC45~\cite{cheng2017remote}. We observe that the single-task performance of $f_\text{lin}(\cdot\,;\,\vth^\star)$ is significantly lower than that of $f(\cdot\,;\,\vth^\star)$ for ViTs of all sizes. This \textit{non-linear advantage}~\cite{fort2020deep} is a clear sign that fine-tuning has not happened in a linear regime as expected by~\citet{wortsman2021robust}.\looseness=-1

Yet, this observation is not enough to rule out that task arithmetic can be explained by linearizing the network function. Indeed, even if the non-linear components are important for single-task performance, they might not be used during task arithmetic, which is the objective of this study. That is, the projection of $f$ onto the tangent space could be the only useful component.\looseness=-1 

We now show this is also not the case, as doing task arithmetic with the non-linearly fine-tuned task vectors over $f_\text{lin}$ significantly decreases performance. To show this, we employ the benchmark proposed in \citet{ilharco2023task} to evaluate the task arithmetic ability of a pre-trained model, which consists of the $8$ tasks described before and two sub-benchmarks:
\begin{enumerate}
\item \textbf{Task addition}: The sum of the task vectors $\vt=\sum_t\vt_t$ is added to a pre-trained checkpoint to produce a multi-task model. The success of this benchmark is measured in terms of the maximum average accuracy over the different tasks. Results are shown in \Cref{tab:task_addition}.\looseness=-1
\item \textbf{Task negation}: A task vector is subtracted from the pre-trained checkpoint to forget a task while retaining performance on a control task (ImageNet). The success of this benchmark is measured in terms of the maximum drop in accuracy on the forgetting task that retains the performance on the control task. Results are averaged over tasks and shown in \Cref{tab:task_negation}.\looseness=-1
\end{enumerate}

To obtain the task vectors, we use the fine-tuned weights of the different ViTs from before, and use a single mixing coefficient $\alpha=\alpha_1=\dots=\alpha_T$ optimized separately for the non-linear and post-hoc linearized models to ensure a fair comparison. We provide all details of this experiment in~\Cref{ap:experiment_details}.\looseness=-1

The results in \Cref{tab:task_addition} confirm that task arithmetic in CLIP models does not stem from the combination of their linear components only. Specifically, we observe a significant drop in absolute task addition accuracy in the \textit{post-hoc} linearized models compared to the non-linear ones. This decrease in performance is consistent across tasks (see \Cref{ap:addition}) and highlights that task arithmetic in non-linear models leverages the non-linear components of $f$, as well.\looseness=-1

Although these results reject the linear hypothesis, it is still remarkable that the post-hoc linearized models do better at task negation than the non-linear ones (see \Cref{tab:task_negation}). Furthermore, even in task addition (see \Cref{tab:task_addition}) they achieve higher normalized accuracies (see definition in \Cref{ap:experiment_details}). Indeed, as we formalize in \Cref{sec:weight_disentanglement}, this observation suggests that linearized models are more consistent with \Cref{prop:task_arithmetic}. In \Cref{sec:enhancing}, we will use this fact to devise a new way to enhance task arithmetic.\looseness=-1

\section{Weight disentanglement}\label{sec:weight_disentanglement}

If the linear regime is not necessary to explain task arithmetic, what are the necessary conditions that allow it? In this section, we argue that the only necessary condition to perform task arithmetic with a model $f$ is that the model is \textit{weight disentangled} with respect to the set of fine-tuning tasks.

\begin{property}[Weight disentanglement]\label{prop:weight_disentanglement}
A parametric function $f:\mathcal{X}\times\Theta\to\mathcal{Y}$ is weight disentangled with respect to a set of task vectors $\mathcal{T}=\{\vt_t\}_{t\in[T]}$ and the corresponding supports $\mathcal{D}=\{\mathcal{D}_t\}_{t\in[T]}$ if
\begin{equation}
f\left(\vx;\vth_0+\sum_{t=1}^T \alpha_t\vt_t\right) = \sum_{t=1}^T g_t(\vx;\alpha_t\vt_t) + g_0(\vx),\label{eq:local_additivity}
\end{equation}
where $g_t(\vx;\alpha_t\vt_t)=\bm 0$ for $\vx\notin\mathcal{D}_t$ and $t=1,\dots,T$, and $g_0(\vx)=0$ for $\vx\in \bigcup_{t\in[T]}\mathcal{D}_t$.
\end{property}

In essence, this definition captures the idea that the function $f$ can be decomposed as a sum of spatially-localized components, \ie vanishing outside a spatial region, whose functional variation is entirely captured by each $\vt_t$ (see~\Cref{fig:sketch}). Moreover, it is trivial to see that satisfying weight disentanglement is equivalent to satisfying \Cref{prop:task_arithmetic} on task arithmetic as one can always write \cref{eq:task_arithmetic} as\looseness=-1
\begin{equation}
f\left(\vx;\vth_0+\sum_{t=1}^T \alpha_t\vt_t\right) = \sum_{t=1}^T f(\vx;\vth_0+\alpha_t\vt_t)\mathbbm{1}(\vx\in\mathcal{D}_t) + f(\vx;\vth_0)\mathbbm{1}\left(\vx\notin\bigcup_{t\in[T]}\mathcal{D}_t\right),
\end{equation}
and identify $g_t(\vx;\alpha_t\vt_t)=f(\vx;\vth_0+\alpha_t\vt_t)\mathbbm{1}(\vx\in\mathcal{D}_t)$ and $g_0(\vx)=f(\vx;\vth_0)\mathbbm{1}(\vx\notin\mathcal{D}_t)$. It is important to highlight, however, that this additive decomposition does not imply linearity, as the local functions $\{g_t\}_{t\in[T]}$ are not required to be linear with respect to the parameters. 

Furthermore, note that weight disentanglement is a property of the predictors and not related to the performance on different tasks. That is, a model could be weight disentangled with respect to a set of task vectors and still perform poorly on a task, \eg if $f(\cdot\,;\,\vth_0+\alpha\vt)$ does not generalize for some $\alpha$. More generally, we can visualize the level of weight disentanglement of a model by measuring its discrepancy with \cref{eq:local_additivity}. To do so, given two tasks, one can check the \textit{disentanglement error} of a model,\looseness=-1
\begin{equation}
\xi(\alpha_1, \alpha_2)=\sum_{t=1}^2\mathbb{E}_{\vx \sim\mu_t} \left[\operatorname{dist}\left(f(\vx;\vth_0+\alpha_t\vt_t), f(\vx;\vth_0+\alpha_1\vt_1+\alpha_2\vt_2)\right)\right],
\end{equation}
where $\operatorname{dist}$ denotes any distance metric between output vectors. As we are dealing with classification tasks, in what follows we use the prediction error $\operatorname{dist}(y_1,y_2)=\mathbbm1(y_1\neq y_2)$ as the distance metric. In general, the smaller the value of $\xi(\alpha_1, \alpha_2)$ the more weight disentangled a model is at $(\alpha_1, \alpha_2)$.

\Cref{fig:heatmaps} displays the disentanglement error of a CLIP ViT-B/32 model concerning several task vector pairs. We observe that the CLIP model exhibits a minimal disentanglement error within a small region surrounding $\vth_0$, which enables task arithmetic. However, for $\alpha_1,\alpha_2>1$, the error increases, indicating a high degree of interaction between tasks. This explains why task arithmetic performs better in a small neighborhood of $\vth_0$ -- task arithmetic is more effective when fine-tuning with small learning rates and few training steps~\cite{ilharco2023task} -- with the optimal value of $\alpha$ typically being less than $1$.\looseness=-1

Comparing the disentanglement error of the non-linear models and their post-hoc linearization reveals an interesting finding: linearized models exhibit greater disentanglement than their non-linear counterparts. This is evident from the more extensive regions with low disentanglement errors in \Cref{fig:heatmaps} (bottom). This explains why the post-hoc linearized models achieve higher normalized accuracies via task addition (cf. \Cref{tab:task_addition}) and manage to forget more through task negation (cf. \Cref{tab:task_negation}). Paradoxically, however, although the greater disentanglement of linearized models allows them to retain more of their relative performance when edited with task arithmetic, they still perform worse in absolute terms due to the great advantage of the non-linear models in single-task accuracy (cf. \Cref{fig:scatter_comp_advantages}). This suggests that closing the single-task performance gap between linearized and non-linear models could be a way to enhance task arithmetic. We leverage this idea in the next section.\looseness=-1

\begin{figure}[t]
    \centering
    \includegraphics[width=0.95\textwidth]{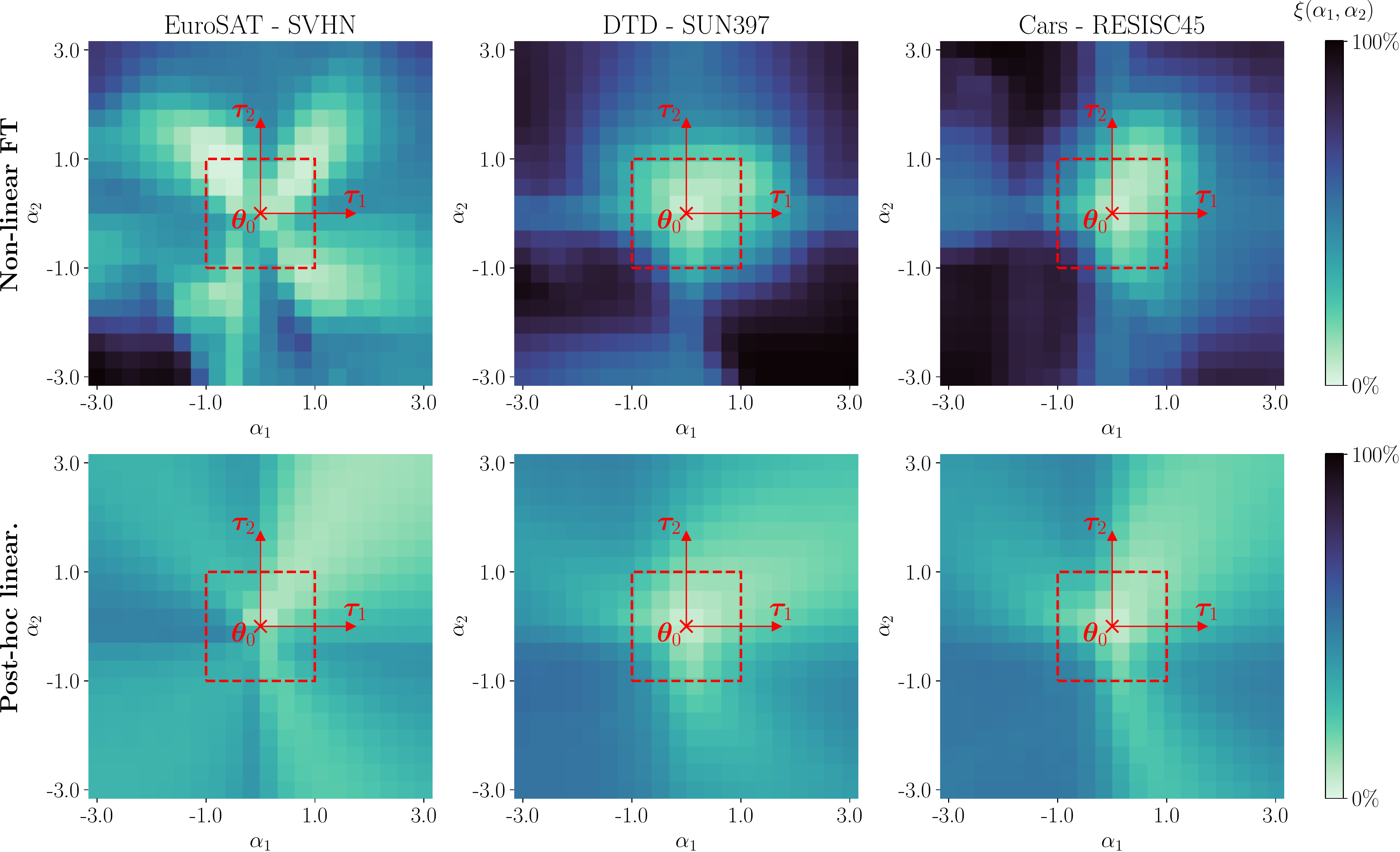}
    \caption{\textbf{Visualization of weight disentanglement.} The heatmaps show the disentanglement error $\xi(\alpha_1, \alpha_2)$ of a non-linear CLIP ViT-B/32 (top) and its post-hoc linearization (bottom) on different example task pairs. The light regions denote areas of the weight space where weight disentanglement is stronger. The red box delimits the search space used to compute the best $\alpha$ in all our experiments.\looseness=-1}
    \label{fig:heatmaps}
    \vspace{-0.3cm}
\end{figure}

\section{Enhancing task arithmetic via linearization}\label{sec:enhancing}

\begin{wrapfigure}[18]{r}{0.5\textwidth}
    \centering
    \vspace{-0.3cm}
    \includegraphics[width=0.5\textwidth]{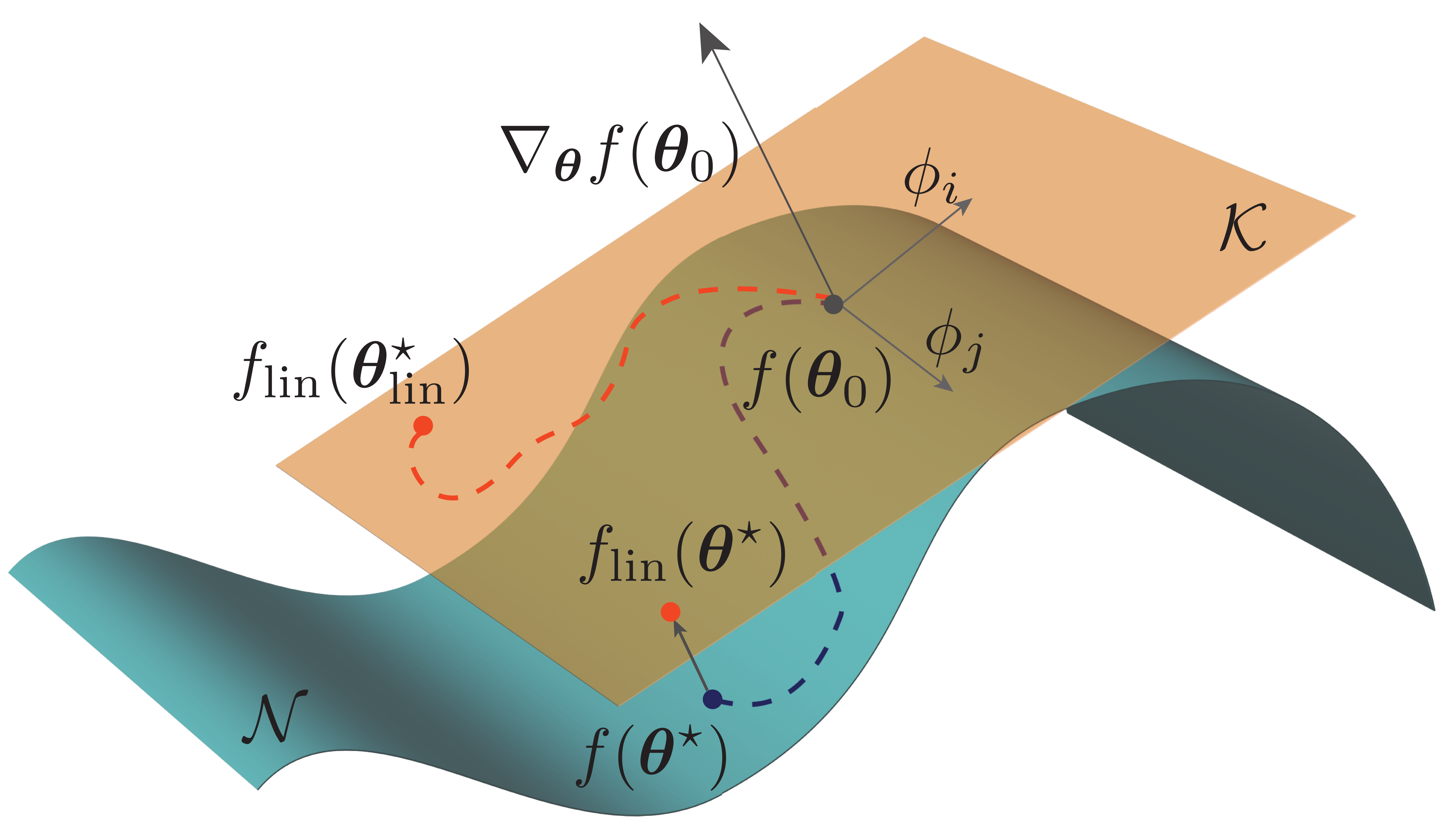}
    \caption{Conceptual illustration of the different approaches we use to edit a pretrained model $f(\cdot\,;\,\vth_0)$. Here $\mathcal{N}$ represents the space of neural network functions $f$, non-linearly parameterized by $\vth\in\bm\Theta$; and $\mathcal{K}$ its tangent space, given by the space of linearized functions $f_{\text{lin}}$.}
    \label{fig:3d}
\end{wrapfigure}
We have seen that linearized models are more weight-disentangled than non-linear ones. However, post-hoc linearization degrades single-task performance. We now demonstrate that enforcing models to fine-tune in the tangent space to their pre-trained initialization significantly improves task arithmetic by reducing the single-task accuracy gap.

\begin{wrapfigure}[16]{r}{0.5\textwidth}
    \vspace{-0.4cm}
    \centering
    \includegraphics[width=0.48\textwidth]{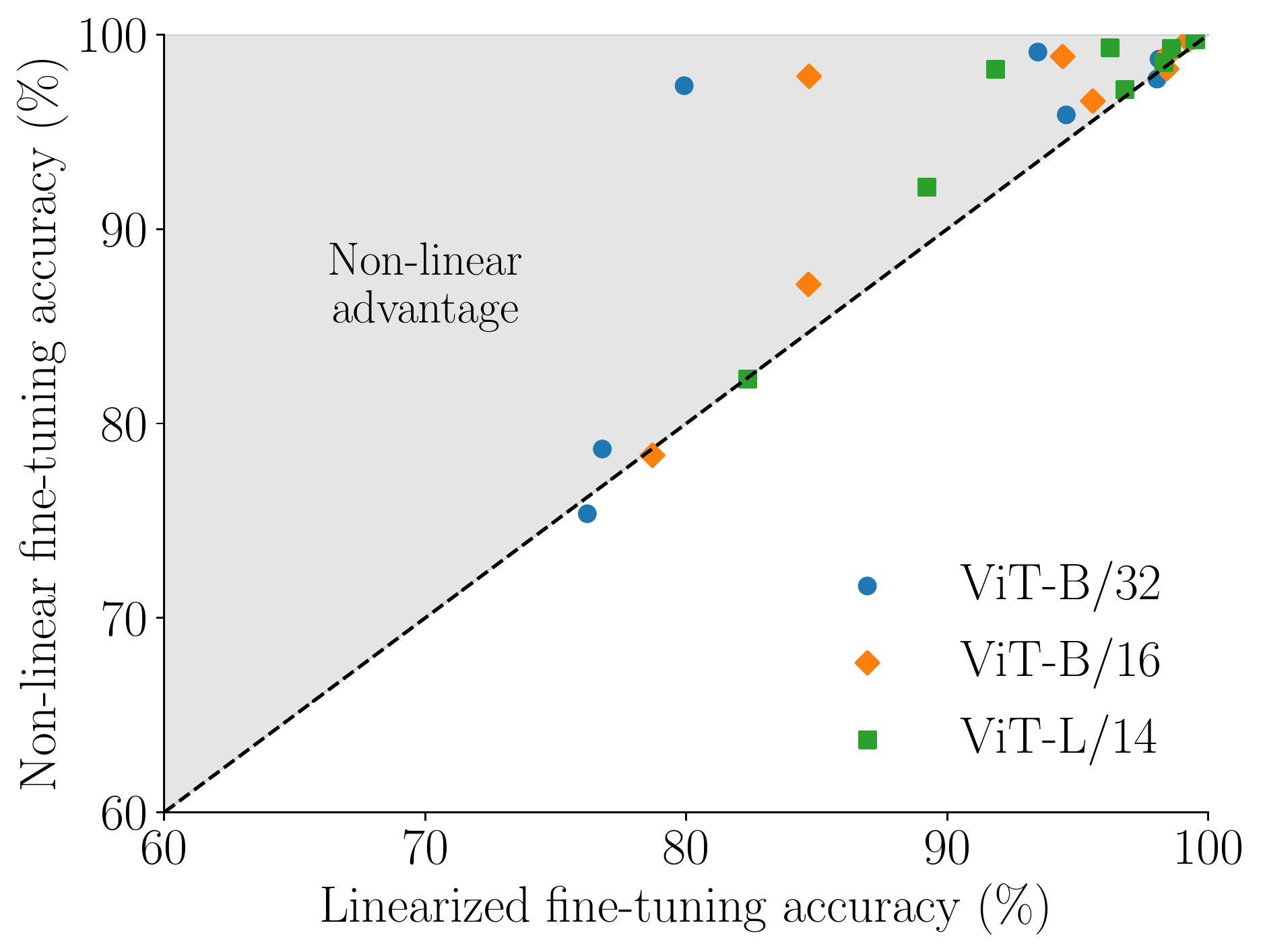}
    \caption{Single-task accuracies of non-linearly FT, $f(\cdot\,;\,\vth^\star)$ and linearly FT, $f_{\rm lin}(\cdot\,;\,\vth_{\rm lin}^\star)$, models.} 
    \label{fig:scatter_advantages}
\end{wrapfigure}

Specifically, rather than applying the non-linearly fine-tuned task vectors $\vt=\vth^\star-\vth_0$ to $f_{\rm lin}$, as in \Cref{sec:not_ntk}, we propose to directly obtain the task vectors through explicit fine-tuning in the tangent space as illustrated in \Cref{fig:3d}. That is, given a model $f$, we directly fine-tune its linear approximation $f_{\rm lin}$ around $\vth_0$~\cite{fort2020deep}. The fine-tuning process can follow the same protocols used before but with the network parameterization dictated by \cref{eq:f_lin}. Due to the linear connection between the weight-space and function-space defined in~\cref{eq:f_lin}, fine-tuning $f_{\rm lin}$ is essentially the same as training a kernel predictor with kernel $k_{\rm NTK}$. As a result, we obtain the fine-tuned weights $\vth_{\rm lin}^\star$ of the linearized model for each task, which allows us to construct the corresponding task vector $\vt_{\rm lin}=\vth_{\rm lin}^\star-\vth_0$. We provide further details of this procedure in \Cref{ap:computational_costs}.\looseness=-1

Moreover, as the considered models do not inherently exhibit linear fine-tuning (see~\Cref{sec:not_ntk}), this approach yields significantly different results compared to post-hoc linearization, \ie ${f_{\rm lin}(\vx;\vth_0+\vt_{\rm lin}) \neq f_{\rm lin}(\vx;\vth_0+\vt)}$. In particular, although both models share the same kernel $k_{\rm NTK}(\vx,\,\vx')$, the task vectors $\vt_{\rm lin}$ have been explicitly optimized to maximize the performance of such linearized models. Consequently, by construction, linearized fine-tuning outperforms post-hoc linearization. Indeed, in \Cref{fig:scatter_advantages}, we observe that linearized fine-tuning significantly reduces the non-linear advantage of non-linear models, as in most cases the performance of $f_{\rm lin}(\cdot\,;\,\vth_0+\vt_{\rm lin})$ is very similar to the one of $f(\cdot\,;\,\vth_0+\vt)$ (cf. \Cref{fig:scatter_comp_advantages}).\looseness=-1

Remarkably, as we show in \Cref{ap:disentanglement_linear}, this increase in single-task performance does not compromise weight disentanglement, which remains as high as for the post-hoc linearized models in \Cref{fig:heatmaps}. As a result, linear fine-tuning allows for improved task arithmetic compared to standard non-linear fine-tuning. In particular, \Cref{tab:task_addition,tab:task_negation} in their last rows show that linearized fine-tuned models significantly outperform their non-linear counterparts and achieve state-of-the-art results on the task addition and negation benchmarks~\cite{ilharco2023task}. The linearized fine-tuned models achieve higher multi-task accuracies through task addition (up to $5.8$ points more) and can forget more through task negation (up to $13.1$ points more) while maintaining a similar level of accuracy on the control task. Additionally, we observe that the advantage of the linearized models over the non-linear ones is higher for the smaller ViT-B/32 and progressively diminishes as the model size increases up to ViT-L/14\footnote{In \Cref{ap:scale}, we observe that larger models are more weight disentangled.}.  \looseness=-1

In general, thanks to the efficiency of the Jacobian-vector product implementations in most deep learning frameworks~\cite{novak2022fast}, training and inference in linearized neural networks only require an $\mathcal{O}(1)$ increase in computational costs with respect to their non-linear counterparts (see~\Cref{ap:computational_costs} for a longer discussion). In this regard, the superiority of task arithmetic of linearized models can make this technique appealing for practical applications. Identifying the right trade-offs between computational cost and performance, as well as faster linearization techniques, is an exciting avenue for future work.\looseness=-1

\section{Towards understanding task arithmetic}\label{sec:understanding}

We conclude by providing further fundamental insights that can aid our understanding of task arithmetic. In particular, we ask whether any kernel can satisfy \Cref{prop:task_arithmetic}, and we establish a connection between task arithmetic and the spectral properties of the NTK. Then, we argue that weight disentanglement and task arithmetic are emergent properties of pre-training.\looseness=-1

\subsection{Eigenfunction localization}\label{sec:spectrum_ntk}

Generally, a kernel~$k$ admits a decomposition in terms of a family of eigenfunction-eigenvalue pairs $\{(\phi_\rho, \lambda_\rho)\}_{\rho\in\mathbb{N}}$; which implies that $k$ can only represent functions of the form $f^\star(\vx) = \sum_{\rho=1}^{\infty}c_\rho \phi_\rho(\vx)$ with a finite kernel norm, \ie $\|f^\star\|^2_\mathcal{H} = \sum_{\rho=1}^\infty \nicefrac{c_\rho^2}{\lambda_\rho}<+\infty$. Specifically, the coefficients $\{c_\rho\}_{\rho\in\mathbb{N}}$ constitute a representation of the function $f^\star$ in the kernel basis.

Consider $T$ tasks $\{f^{\star}_t\}_{t\in[T]}$ supported in their respective non-intersecting domains $\{\mathcal{D}_t\}_{t\in[T]}$. Furthermore, let $\{\phi_\rho\}_{\rho\in\mathbb{N}}$ be an orthogonal basis of eigenfunctions that diagonalizes the kernel on the union of all $\mathcal{D}_t$'s. The following proposition provides a sufficient condition on the representation of the tasks in this basis to ensure the task arithmetic property:

\clearpage
\begin{proposition}[Simplified]
\label{propo:localization}
Suppose that $\{f^\star_t\}_{t\in[T]}$ can be represented by the kernel $k$. The kernel $k$ is capable of performing task arithmetic with respect to $\{f^\star_t\}_{t\in[T]}$ and $\{\mathcal{D}_t\}_{t\in[T]}$ if, for each task $t$, there exists a subset of localized eigenfunctions such that i) $\operatorname{supp}(\phi) \subseteq \mathcal{D}_t$ for each $\phi$ in the subset, and ii) the representation of $f^\star_t$ only involves these basis functions.
\end{proposition}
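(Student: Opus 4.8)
The plan is to exploit the linearity of kernel regression in the eigenbasis to reduce the task arithmetic property (\Cref{prop:task_arithmetic}) to a statement about disjoint supports of eigenfunctions. First I would recall the explicit form of the kernel predictor: fine-tuning $f_{\rm lin}$ on task $t$ with kernel $k$ and target $f^\star_t$ yields a minimum-norm interpolant $\hat f_t = \sum_\rho c^{(t)}_\rho \phi_\rho$, and by hypothesis (ii) the coefficients $c^{(t)}_\rho$ are supported only on the index set $I_t$ of localized eigenfunctions with $\operatorname{supp}(\phi_\rho)\subseteq\mathcal D_t$. Because kernel regression is a \emph{linear} map from the (change in) parameters to the (change in) output function—this is exactly \cref{eq:f_lin}—the task vector $\vt_t=\vth^\star_{{\rm lin},t}-\vth_0$ acts additively: $f_{\rm lin}(\vx;\vth_0+\sum_t\alpha_t\vt_t)-f(\vx;\vth_0)=\sum_t\alpha_t\,(\vt_t^\top\nabla_\vth f(\vx;\vth_0)) = \sum_t \alpha_t\,(\hat f_t(\vx)-f(\vx;\vth_0))$. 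So the left-hand side of \cref{eq:task_arithmetic} equals $f(\vx;\vth_0)+\sum_t\alpha_t\sum_{\rho\in I_t}c^{(t)}_\rho\phi_\rho(\vx)$.

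The second step is the support/disjointness argument. Fix $\vx\in\mathcal D_{t'}$ for some $t'$. Since the domains $\{\mathcal D_t\}$ are non-intersecting and $\operatorname{supp}(\phi_\rho)\subseteq\mathcal D_t$ for $\rho\in I_t$, every term with $t\neq t'$ vanishes at $\vx$: $\phi_\rho(\vx)=0$ for $\rho\in I_t$, $t\neq t'$. Hence the sum collapses to $f(\vx;\vth_0)+\alpha_{t'}\sum_{\rho\in I_{t'}}c^{(t')}_\rho\phi_\rho(\vx) = f_{\rm lin}(\vx;\vth_0+\alpha_{t'}\vt_{t'})$, which is the first case of \cref{eq:task_arithmetic}. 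For $\vx\notin\bigcup_t\mathcal D_t$, all $\phi_\rho(\vx)$ with $\rho\in\bigcup_t I_t$ vanish, so the output reduces to $f(\vx;\vth_0)$, giving the second case. I should also note that the localized eigenfunctions of the whole union diagonalization can be taken to restrict to eigenfunctions of the per-domain kernel, so that the single-task predictor genuinely uses only the $I_t$ block—this is where hypotheses (i) and (ii) are jointly used.

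The main obstacle, and the place where the ``simplified'' qualifier in the statement is doing work, is making the eigen-decomposition and the minimum-norm interpolation rigorous when the eigenfunctions are defined with respect to measures on overlapping-in-the-ambient-space but disjoint domains: one needs that an eigenfunction $\phi_\rho$ of the kernel on $\bigcup_t\mathcal D_t$ that is supported in a single $\mathcal D_t$ is also (proportional to) an eigenfunction of the kernel restricted to $\mathcal D_t$, and conversely that the representation of $f^\star_t$ can be ``lifted'' consistently. This requires either assuming the kernel is block-diagonal across the $\mathcal D_t$'s on the localized subspace, or invoking an orthogonality argument ($\langle\phi_\rho,\phi_{\rho'}\rangle=0$ across domains because the supports are disjoint) together with an RKHS-norm minimality argument to ensure the fitted coefficients outside $I_t$ are exactly zero rather than merely small. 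I would handle this by stating the needed block-structure/orthogonality as the precise hypothesis in the non-simplified version and then the argument above goes through verbatim; the remaining steps (linearity of the kernel predictor, vanishing of cross terms) are routine.
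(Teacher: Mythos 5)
Your proposal is correct and follows essentially the same route as the paper's proof: both reduce the task arithmetic property to the vanishing of cross-task interference terms, which follows because each task's representation uses only eigenfunctions supported inside its own (disjoint) domain. The paper works directly at the function level (writing the condition $\sum_t f^\star_t = f^\star_{t'}$ on $\mathcal{D}_{t'}$ in the eigenbasis and showing the off-task summands vanish), while you additionally spell out the weight-space additivity of $f_{\rm lin}$ and the representer-theorem step that the paper leaves implicit; the key disjoint-support argument is identical.
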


The proof and formal statement are deferred to \Cref{ap:spectral}. Intuitively, if each task is represented with eigenfunctions that vanish outside the spatial region identified by the task support, the functions corresponding to different tasks do not interfere. Based on \Cref{propo:localization}, it is natural to examine whether the NTK of CLIP models displays eigenfunctions localized in each task domain and if it represents the different tasks using these functions. According to the \textit{representer theorem} of kernels~\cite{scholkopf2002kernels}, after linear fine-tuning on task $t$ with a training set $\{(\vx_\nu,f^\star_t(\vx_\nu))\}_{\nu\in[n_t]}$ and $\vx_\nu\sim\mu_t$, the CLIP's predictor evaluated at a new point ${\vx \in \mathcal{X}}$ can be expressed as a linear combination of its kernel $k_{\rm NTK}$ evaluated on $\vx$ and the training data, \ie $f_{\text{lin}}(\vx) = f(\vx;\vth_0)+\sum_{\nu \in [n_t]} \beta_\nu \, k_{\rm NTK}(\vx_\nu,\,\vx)$.\looseness=-1

To explore whether CLIP models use localized eigenfunctions for task arithmetic, we diagonalize the matrix $(K_{\rm NTK})_{ij}=k_{\rm NTK}(\vx_i,\,\vx_j)$ with $\vx_i \in \mathcal{D}_t$, \ie the task on which we trained, and $\vx_j\in\D_t\cup\D_{t'}$, where $\D_{t'}$ is the support of a control task. If the eigenfunctions used to represent $f^\star(\vx)$ are localized, then the power of the eigenvectors of $K_{\rm NTK}$ must be concentrated in the points belonging to the dataset used for training. To measure this concentration, we introduce the local energy $\mathcal{E}_{\rm loc}(\vx) = \sum_{\rho} \phi_\rho^2(\vx)$, which sums the power of all the eigenfunctions $\phi_\rho$ at a given point $\vx$.

\begin{wrapfigure}[17]{r}{0.5\textwidth}
    \vspace{-0.5cm}
    \centering
    \includegraphics[width=0.5\textwidth]{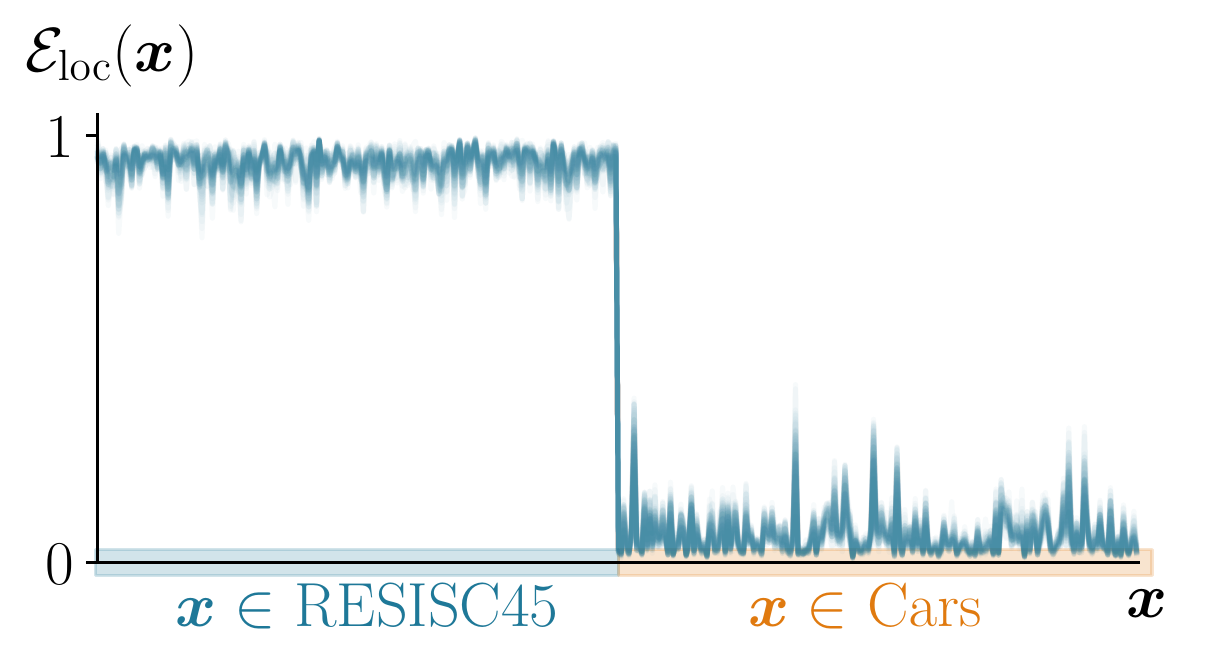}
    \caption{\textbf{Eigenfunction localization.} Estimated support of the eigenfunctions of the NTK of a ViT-B/32 CLIP model trained on RESISC45. The plot shows the sum of the local energy of the eigenfunctions over a random subset of the training and control supports (RESISC45 and Cars, respectively).}
    \label{fig:localization}
\end{wrapfigure}

In~\Cref{fig:localization}, we plot this metric for a ViT-B/32 CLIP model trained on RESISC45 with Cars as control. We provide results for other task pairs in \Cref{ap:local_energy}. Notably, the local energy of the eigenfunctions that the predictor uses to represent the RESISC45 task is significantly higher for points belonging to the training dataset. This confirms the presence of eigenfunctions localized across the different data domains and the fact that task arithmetic occurs thanks to the use of those. Indeed, thanks to this localization, CLIP models can effectively separate the representation of different tasks and carry out task-specific operations without interference. We believe that further investigation into this intriguing localization phenomenon holds the potential to deepen our understanding of these models.\looseness=-1

\textbf{Remark.} While we have shown that localized eigenfunctions can play a crucial role in task arithmetic, it is important to note that they are not always necessary. In fact, the task arithmetic property can hold even if the eigenfunctions used to represent a single task cancel outside the corresponding domain. Indeed, although eigenfunctions are linearly independent on the union of the domains, they are not necessarily linearly independent when evaluated on a single domain and, in general, can cancel out. However, if the eigenfunctions maintain their linear independence on each of the domains, \ie they are \textit{locally} linear independent, then the existence of localized eigenfunctions becomes a necessary condition for task arithmetic. This means that if the eigenfunctions are locally linearly independent and not localized, task arithmetic is not possible. We provide some analytical examples of the latter case in \Cref{ap:spectral}, including the NTKs of fully-connected and convolutional networks at initialization.\looseness=-1

\subsection{Weight disentanglement emerges during pre-training}\label{sec:random_init}

Task arithmetic is not exclusive to CLIP models. In fact, task arithmetic can also be performed on pre-trained text transformers~\cite{ilharco2023task,wortsman2022model}, such as GPT-2~\cite{radford2019language} or T5~\cite{colin2020exploring} and convolutional neural networks~\cite{ilharco2022patching} as we also show in \Cref{ap:other,ap:cnn}. However, it is still unclear if the origin of weight disentanglement comes from pre-training, or if it is a general property of deep networks.\looseness=-1

\begin{table}[t]

\caption{\textbf{Task addition from random initialization.} We use the same setup as for the experiments in \Cref{tab:task_addition} but with task vectors obtained from fine-tuning randomly initialized ViTs. Results compare the average single-task accuracy (\%) after fine-tuning and the multi-task accuracy (\%) via task addition.\looseness=-1}\label{tab:random_addition}

\vspace{0.5em}
\centering
\begin{tabular}{@{}lcc|cc|cc@{}}
\toprule
\multirow{2}{*}{Method} & \multicolumn{2}{c|}{ViT-B/32} & \multicolumn{2}{c|}{ViT-B/16} & \multicolumn{2}{c}{ViT-L/14}  \\
                 & Sing. {\small ($\uparrow$)} & Multi {\small ($\uparrow$)} & Sing. {\small ($\uparrow$)} & Multi {\small ($\uparrow$)} & Sing. {\small ($\uparrow$)} & Multi {\small ($\uparrow$)} \\ \midrule
Random init {\small \hspace{2.2em}$f(\cdot\,;\,\vth^{\rm rd}_0)$}     & 5.3     &  --      &  4.8    & --       & 5.2     & --       \\ \midrule
Non-lin. FT {\small \hspace{0.4em}$f(\cdot\,;\,\vth^{\rm rd}_0+\vt^{\rm rd})$}    & 48.5 & 5.5        & 40.6 & 4.5      & 18.0    & 4.8      \\
Linear. FT  {\small \hspace{0.3em}$f_{\rm lin}(\cdot\,;\,\vth_{0}^{\rm rd}+\vt_{\rm lin}^{\rm rd})$}   & 27.8 & 3.8        & 24.7 & 4.0 & 24.8 &  6.1 \\ \bottomrule
\end{tabular}
\vspace{-0.3cm}
\end{table}

To investigate this, we replicate the task addition experiments but employ randomly initialized ViTs instead of pre-trained ones. The results in \Cref{tab:random_addition} reveal that task arithmetic is not achievable on randomly initialized ViTs. Indeed, adding task vectors obtained from a random initialization $\vth_{0}^{\rm rd}$ does not result in significant improvements in multi-task accuracy over random chance. This holds true for both non-linear task vectors, $\vt^{\rm rd}$, and linearized ones, $\vt_{\rm lin}^{\rm rd}$. In \Cref{ap:random}, we further corroborate these findings by computing the disentanglement error and the NTK spectrum of randomly initialized models.\looseness=-1

Therefore, we conclude that task arithmetic is a property acquired during pre-training. This observation goes beyond the traditional representation learning view of pre-training, emphasizing that pre-training not only leads to semantically disentangled feature representations but also to the disentanglement of the weights that govern the output on those semantic sets. Investigating the pre-training dynamics that give rise to such disentanglement is another interesting avenue for future research.

\section{Related work}

\paragraph{Weight interpolation and task arithmetic.} A growing body of work is exploring the use of interpolations between model weights and task arithmetic to manipulate and enhance the capabilities of pre-trained models. In particular, several studies have shown that interpolating between a model's fine-tuned weights and its pre-trained initialization can lead to improved performance on single tasks, even surpassing their fine-tuning accuracies~\cite{wortsman2021robust,matena2021merging,frankle2020linear,izmailov2018averaging,diwa,ratatouille}. In the multi-task setting, averaging the parameters of multiple fine-tuned models has been proposed to produce superior multi-task models~\cite{ilharco2023task,li2022branch,ilharco2022patching,wortsman2022model,ties} that avoid catastrophic forgetting~\cite{french1999catastrophic,mccloskey1989catastrophic} and even provide a better starting point for subsequent fine-tuning~\cite{choshen2022fusing,don2022cold}. Interestingly, the benefits of weight ensembles and interpolations extend to models trained from scratch, as long as they are properly aligned before merging~\cite{ainsworth2022git,singh2020fusion}. This phenomenon has been observed to enhance downstream performance, further emphasizing the potential of weight interpolation and task arithmetic techniques such as the ones studied in this work.\looseness=-1

\paragraph{Linear \emph{vs} non-linear regime.} Extensive research has been conducted on comparing generalization and dynamical properties of neural networks in linear and non-linear regimes~\cite{fort2020deep,geiger2020disentangling,paccolat2021compression,baratin2021alignment,vyas2022limitations, petrini2022learning} 
and investigating specific inductive biases~\cite{yuce2022inrs,tancik2020fourier,bachmann2021uniform,chua2021fine,mu2020gradients,achille2021lqf,malladi2022kernel}. In addition to theoretical understanding, several studies have applied linearized models for practical purposes, such as predicting fine-tuning generalization~\cite{deshpande2021linearized} and training speed~\cite{zancato2020predicting}, as well as enhancing calibration~\cite{maddox2021fast} and few-shot performance~\cite{arora2020harnessing}. Our work serves as another example of the utility of linearized models in certain scenarios where they do not only offer practical benefits but also provide valuable theoretical insights.\looseness=-1

\paragraph{Feature disentanglement.} The notion of feature disentanglement lies at the heart of representation learning, where ideal representations are assumed to separate distinct data variation factors along different directions in the feature space~\cite{bengio2013representation,higgins2018definition,achille2018emergence}. A multitude of approaches in generative modeling~\cite{higgins2017betavae,rezende2014stochastic,chen2016infogan} and self-supervised learning~\cite{simclr,radford2021learning,bachman2019representation,locatello2019challenging} strive to achieve this goal. Our investigation, however, explores a distinct aspect: \textit{weight disentanglement} within the framework of task arithmetic. Departing from the static perspective of feature disentanglement, weight disentanglement connects weight space and function space transitions, thereby enriching our understanding of disentanglement in neural networks from a functional standpoint. Several studies have previously attempted to exploit a similar notion by inducing the learning of task-specific subnetworks within a larger network~\cite{wortsman2020supermasks,havasi2021mimo,wen2020batchensemble,mallya2018packnet,mallya2018piggyback,masse2018alleviating,hu2022lora}. To the best of our knowledge, our work is the first to demonstrate the natural emergence of such phenomena in specific semantically meaningful tasks during CLIP pre-training.\looseness=-1

\section{Conclusion}

In this work, we conducted a thorough analysis of task arithmetic in deep neural networks, delving into its fundamental mechanisms and enhancing its performance. Our findings demonstrate that linearized models, governed by the NTK, outperform their non-linear counterparts in task arithmetic, thus providing a more effective approach for model editing. Crucially, we revealed that weight disentanglement plays a vital role in the success of task arithmetic, as distinct directions in weight space correspond to localized areas in the function space; and that it is an emergent property of pre-training.\looseness=-1

A fascinating open question consists in understanding how weight disentanglement arises during pre-training and finding algorithms that enhance it. Another exciting research direction is investigating the potential of tangent spaces for editing other pre-trained models. In this sense, developing more efficient linearized models would be a significant leap forward in this field. These advancements could pave the way for novel approaches to model editing and deepen our understanding of the intricate relationship between weight space and function space in deep learning.

\section*{Acknowledgements}

We thank Nikolaos Dimitriadis, Adam Hazimeh, Pau de Jorge, Nikolaos Karalias, Seyed Mohsen Moosavi-Dezfooli, Antonio Sclocchi, Thibault Séjourné, and Matthieu Wyart for helpful feedback and comments. We also thank Gabriel Ilharco for helping set up the code to reproduce their experiments.

\bibliographystyle{plainnat}
\bibliography{main}

\begin{thebibliography}{92}
\providecommand{\natexlab}[1]{#1}
\providecommand{\url}[1]{\texttt{#1}}
\expandafter\ifx\csname urlstyle\endcsname\relax
  \providecommand{\doi}[1]{doi: #1}\else
  \providecommand{\doi}{doi: \begingroup \urlstyle{rm}\Url}\fi

\bibitem[Achille and Soatto(2018)]{achille2018emergence}
Alessandro Achille and Stefano Soatto.
\newblock Emergence of invariance and disentanglement in deep representations.
\newblock \emph{Journal of Machine Learning Research (JMLR)}, 2018.
\newblock \url{http://jmlr.org/papers/v19/17-646.html}.

\bibitem[Achille et~al.(2021)Achille, Golatkar, Ravichandran, Polito, and
  Soatto]{achille2021lqf}
Alessandro Achille, Aditya Golatkar, Avinash Ravichandran, Marzia Polito, and
  Stefano Soatto.
\newblock {LQF}: Linear quadratic fine-tuning.
\newblock In \emph{IEEE Conference on Computer Vision and Pattern Recognition
  (CVPR)}, 2021.
\newblock \url{https://arxiv.org/abs/2012.11140}.

\bibitem[Ainsworth et~al.(2023)Ainsworth, Hayase, and
  Srinivasa]{ainsworth2022git}
Samuel~K Ainsworth, Jonathan Hayase, and Siddhartha Srinivasa.
\newblock Git re-basin: Merging models modulo permutation symmetries.
\newblock In \emph{International Conference on Learning Representations
  (ICLR)}, 2023.
\newblock \url{https://arxiv.org/abs/2209.04836}.

\bibitem[Arora et~al.(2019)Arora, Du, Hu, Li, Salakhutdinov, and
  Wang]{arora2019exact}
Sanjeev Arora, Simon~S. Du, Wei Hu, Zhiyuan Li, Ruslan Salakhutdinov, and
  Ruosong Wang.
\newblock On exact computation with an infinitely wide neural net.
\newblock In \emph{Advances in Neural Information Processing Systems
  (NeurIPS)}, 2019.
\newblock
  \url{https://proceedings.neurips.cc/paper/2019/hash/dbc4d84bfcfe2284ba11beffb853a8c4-Abstract.html}.

\bibitem[Arora et~al.(2020)Arora, Du, Li, Salakhutdinov, Wang, and
  Yu]{arora2020harnessing}
Sanjeev Arora, Simon~S. Du, Zhiyuan Li, Ruslan Salakhutdinov, Ruosong Wang, and
  Dingli Yu.
\newblock Harnessing the power of infinitely wide deep nets on small-data
  tasks.
\newblock In \emph{International Conference on Learning Representations
  (ICLR)}, 2020.
\newblock \url{https://arxiv.org/abs/1910.01663}.

\bibitem[Bachman et~al.(2019)Bachman, Hjelm, and
  Buchwalter]{bachman2019representation}
Philip Bachman, R.~Devon Hjelm, and William Buchwalter.
\newblock Learning representations by maximizing mutual information across
  views.
\newblock In \emph{Advances in Neural Information Processing Systems
  (NeurIPS)}, 2019.
\newblock
  \url{https://proceedings.neurips.cc/paper/2019/hash/ddf354219aac374f1d40b7e760ee5bb7-Abstract.html}.

\bibitem[Bachmann et~al.(2021)Bachmann, Moosavi{-}Dezfooli, and
  Hofmann]{bachmann2021uniform}
Gregor Bachmann, Seyed{-}Mohsen Moosavi{-}Dezfooli, and Thomas Hofmann.
\newblock Uniform convergence, adversarial spheres and a simple remedy.
\newblock In \emph{International Conference on Machine Learning ({ICML})},
  2021.
\newblock \url{http://proceedings.mlr.press/v139/bachmann21a.html}.

\bibitem[Baratin et~al.(2021)Baratin, George, Laurent, Hjelm, Lajoie, Vincent,
  and Lacoste{-}Julien]{baratin2021alignment}
Aristide Baratin, Thomas George, C{\'{e}}sar Laurent, R.~Devon Hjelm, Guillaume
  Lajoie, Pascal Vincent, and Simon Lacoste{-}Julien.
\newblock Implicit regularization via neural feature alignment.
\newblock In \emph{International Conference on Artificial Intelligence and
  Statistics ({AISTATS})}, 2021.
\newblock \url{http://proceedings.mlr.press/v130/baratin21a.html}.

\bibitem[Baydin et~al.(2017)Baydin, Pearlmutter, Radul, and
  Siskind]{gunes2017ad}
Atilim~Gunes Baydin, Barak~A. Pearlmutter, Alexey~Andreyevich Radul, and
  Jeffrey~Mark Siskind.
\newblock Automatic differentiation in machine learning: a survey.
\newblock \emph{Journal of Machine Learning Research (JMLR)}, 2017.
\newblock \url{http://jmlr.org/papers/v18/17-468.html}.

\bibitem[Bengio et~al.(2013)Bengio, Courville, and
  Vincent]{bengio2013representation}
Yoshua Bengio, Aaron~C. Courville, and Pascal Vincent.
\newblock Representation learning: {A} review and new perspectives.
\newblock \emph{{IEEE} Transactions on Pattern Analysis and Machine
  Intelligence}, 2013.
\newblock \url{https://arxiv.org/abs/1206.5538}.

\bibitem[Bordelon et~al.(2020)Bordelon, Canatar, and
  Pehlevan]{bordelon2020spectrum}
Blake Bordelon, Abdulkadir Canatar, and Cengiz Pehlevan.
\newblock Spectrum dependent learning curves in kernel regression and wide
  neural networks.
\newblock In \emph{International Conference on Machine Learning ({ICML})},
  2020.
\newblock \url{http://proceedings.mlr.press/v119/bordelon20a.html}.

\bibitem[Cagnetta et~al.(2022)Cagnetta, Favero, and Wyart]{cagnetta2022can}
Francesco Cagnetta, Alessandro Favero, and Matthieu Wyart.
\newblock What can be learnt with wide convolutional neural networks?, 2022.
\newblock \url{https://arxiv.org/abs/2208.01003}.

\bibitem[Chen et~al.(2020)Chen, Kornblith, Norouzi, and Hinton]{simclr}
Ting Chen, Simon Kornblith, Mohammad Norouzi, and Geoffrey Hinton.
\newblock A simple framework for contrastive learning of visual
  representations.
\newblock In \emph{International Conference on Machine Learning (ICML)}, 2020.
\newblock \url{https://proceedings.mlr.press/v119/chen20j/chen20j.pdf}.

\bibitem[Chen et~al.(2016)Chen, Duan, Houthooft, Schulman, Sutskever, and
  Abbeel]{chen2016infogan}
Xi~Chen, Yan Duan, Rein Houthooft, John Schulman, Ilya Sutskever, and Pieter
  Abbeel.
\newblock {InfoGAN}: Interpretable representation learning by information
  maximizing generative adversarial nets.
\newblock In \emph{Advances in Neural Information Processing Systems
  (NeurIPS)}, 2016.
\newblock
  \url{https://proceedings.neurips.cc/paper/2016/hash/7c9d0b1f96aebd7b5eca8c3edaa19ebb-Abstract.html}.

\bibitem[Cheng et~al.(2017)Cheng, Han, and Lu]{cheng2017remote}
Gong Cheng, Junwei Han, and Xiaoqiang Lu.
\newblock Remote sensing image scene classification: Benchmark and state of the
  art.
\newblock \emph{Proceedings of the IEEE}, 2017.
\newblock \url{https://ieeexplore.ieee.org/document/7891544}.

\bibitem[Chizat et~al.(2019)Chizat, Oyallon, and Bach]{chizat2019lazy}
Lenaic Chizat, Edouard Oyallon, and Francis Bach.
\newblock On lazy training in differentiable programming.
\newblock \emph{Advances in Neural Information Processing Systems (NeurIPS)},
  2019.
\newblock
  \url{https://proceedings.neurips.cc/paper/2019/file/ae614c557843b1df326cb29c57225459-Paper.pdf}.

\bibitem[Choshen et~al.(2022)Choshen, Venezian, Slonim, and
  Katz]{choshen2022fusing}
Leshem Choshen, Elad Venezian, Noam Slonim, and Yoav Katz.
\newblock Fusing finetuned models for better pretraining, 2022.
\newblock \url{https://arxiv.org/abs/2204.03044}.

\bibitem[Christensen and Christensen(2006)]{christensen2006linear}
Ole Christensen and Khadija~L Christensen.
\newblock Linear independence and series expansions in function spaces.
\newblock \emph{The American Mathematical Monthly}, 2006.
\newblock
  \url{https://www.tandfonline.com/doi/abs/10.1080/00029890.2006.11920343}.

\bibitem[Chua et~al.(2021)Chua, Lei, and Lee]{chua2021fine}
Kurtland Chua, Qi~Lei, and Jason~D Lee.
\newblock How fine-tuning allows for effective meta-learning.
\newblock In \emph{Advances in Neural Information Processing Systems
  (NeurIPS)}, 2021.
\newblock
  \url{https://proceedings.neurips.cc/paper_files/paper/2021/file/4a533591763dfa743a13affab1a85793-Paper.pdf}.

\bibitem[Cimpoi et~al.(2014)Cimpoi, Maji, Kokkinos, Mohamed, and Vedaldi]{dtd}
Mircea Cimpoi, Subhransu Maji, Iasonas Kokkinos, Sammy Mohamed, and Andrea
  Vedaldi.
\newblock Describing textures in the wild.
\newblock In \emph{IEEE Conference on Computer Vision and Pattern Recognition
  (CVPR)}, 2014.
\newblock
  \url{https://openaccess.thecvf.com/content_cvpr_2014/html/Cimpoi_Describing_Textures_in_2014_CVPR_paper.html}.

\bibitem[Deng et~al.(2009)Deng, Dong, Socher, Li, Li, and
  Fei-Fei]{deng2009imagenet}
Jia Deng, Wei Dong, Richard Socher, Li-Jia Li, Kai Li, and Li~Fei-Fei.
\newblock Imagenet: A large-scale hierarchical image database.
\newblock In \emph{IEEE Conference on Computer Vision and Pattern Recognition
  (CVPR)}, 2009.
\newblock \url{https://ieeexplore.ieee.org/abstract/document/5206848}.

\bibitem[Deshpande et~al.(2021)Deshpande, Achille, Ravichandran, Li, Zancato,
  Fowlkes, Bhotika, Soatto, and Perona]{deshpande2021linearized}
Aditya Deshpande, Alessandro Achille, Avinash Ravichandran, Hao Li, Luca
  Zancato, Charless~C. Fowlkes, Rahul Bhotika, Stefano Soatto, and Pietro
  Perona.
\newblock A linearized framework and a new benchmark for model selection for
  fine-tuning, 2021.
\newblock \url{https://arxiv.org/abs/2102.00084}.

\bibitem[Don-Yehiya et~al.(2022)Don-Yehiya, Venezian, Raffel, Slonim, Katz, and
  Choshen]{don2022cold}
Shachar Don-Yehiya, Elad Venezian, Colin Raffel, Noam Slonim, Yoav Katz, and
  Leshem Choshen.
\newblock Cold fusion: Collaborative descent for distributed multitask
  finetuning, 2022.
\newblock \url{https://arxiv.org/abs/2212.01378}.

\bibitem[Dosovitskiy et~al.(2021)Dosovitskiy, Beyer, Kolesnikov, Weissenborn,
  Zhai, Unterthiner, Dehghani, Minderer, Heigold, Gelly, Uszkoreit, and
  Houlsby]{dosovitskiy2021an}
Alexey Dosovitskiy, Lucas Beyer, Alexander Kolesnikov, Dirk Weissenborn,
  Xiaohua Zhai, Thomas Unterthiner, Mostafa Dehghani, Matthias Minderer, Georg
  Heigold, Sylvain Gelly, Jakob Uszkoreit, and Neil Houlsby.
\newblock An image is worth 16x16 words: Transformers for image recognition at
  scale.
\newblock In \emph{International Conference on Learning Representations
  (ICLR)}, 2021.
\newblock \url{https://openreview.net/forum?id=YicbFdNTTy}.

\bibitem[Favero et~al.(2021)Favero, Cagnetta, and Wyart]{favero2021locality}
Alessandro Favero, Francesco Cagnetta, and Matthieu Wyart.
\newblock Locality defeats the curse of dimensionality in convolutional
  teacher-student scenarios.
\newblock In \emph{Advances in Neural Information Processing Systems
  (NeurIPS)}, 2021.
\newblock
  \url{https://proceedings.neurips.cc/paper_files/paper/2021/hash/4e8eaf897c638d519710b1691121f8cb-Abstract.html}.

\bibitem[Fort et~al.(2020)Fort, Dziugaite, Paul, Kharaghani, Roy, and
  Ganguli]{fort2020deep}
Stanislav Fort, Gintare~Karolina Dziugaite, Mansheej Paul, Sepideh Kharaghani,
  Daniel~M Roy, and Surya Ganguli.
\newblock Deep learning versus kernel learning: an empirical study of loss
  landscape geometry and the time evolution of the neural tangent kernel.
\newblock In \emph{Advances in Neural Information Processing Systems
  (NeurIPS)}, 2020.
\newblock
  \url{https://proceedings.neurips.cc/paper/2020/file/405075699f065e43581f27d67bb68478-Paper.pdf}.

\bibitem[Frankle et~al.(2020)Frankle, Dziugaite, Roy, and
  Carbin]{frankle2020linear}
Jonathan Frankle, Gintare~Karolina Dziugaite, Daniel Roy, and Michael Carbin.
\newblock Linear mode connectivity and the lottery ticket hypothesis.
\newblock In \emph{International Conference on Machine Learning (ICML)}, 2020.
\newblock \url{https://proceedings.mlr.press/v119/frankle20a.html}.

\bibitem[French(1999)]{french1999catastrophic}
Robert~M French.
\newblock Catastrophic forgetting in connectionist networks.
\newblock \emph{Trends in Cognitive Sciences}, 1999.
\newblock
  \url{https://www.sciencedirect.com/science/article/pii/S1364661399012942}.

\bibitem[Geifman et~al.(2022)Geifman, Galun, Jacobs, and
  Ronen]{geifman2022spectral}
Amnon Geifman, Meirav Galun, David Jacobs, and Basri Ronen.
\newblock On the spectral bias of convolutional neural tangent and gaussian
  process kernels.
\newblock \emph{Advances in Neural Information Processing Systems (NeurIPS)},
  2022.
\newblock \url{https://arxiv.org/abs/2203.09255}.

\bibitem[Geiger et~al.(2020)Geiger, Spigler, Jacot, and
  Wyart]{geiger2020disentangling}
Mario Geiger, Stefano Spigler, Arthur Jacot, and Matthieu Wyart.
\newblock Disentangling feature and lazy training in deep neural networks.
\newblock \emph{Journal of Statistical Mechanics: Theory and Experiment}, 2020.
\newblock \url{https://iopscience.iop.org/article/10.1088/1742-5468/abc4de}.

\bibitem[Glaese et~al.(2022)Glaese, McAleese, Trebacz, Aslanides, Firoiu,
  Ewalds, Rauh, Weidinger, Chadwick, Thacker, Campbell-Gillingham, Uesato,
  Huang, Comanescu, Yang, See, Dathathri, Greig, Chen, Fritz, Sanchez~Elias,
  Green, Mokra, Fernando, Wu, Foley, Young, Gabriel, Isaac, Mellor, Hassabis,
  Kavukcuoglu, Hendricks, and Irving]{sparrow}
Amelia Glaese, Nat McAleese, Maja Trebacz, John Aslanides, Vlad Firoiu, Timo
  Ewalds, Maribeth Rauh, Laura Weidinger, Martin Chadwick, Phoebe Thacker, Lucy
  Campbell-Gillingham, Jonathan Uesato, Po-Sen Huang, Ramona Comanescu, Fan
  Yang, Abigail See, Sumanth Dathathri, Rory Greig, Charlie Chen, Doug Fritz,
  Jaume Sanchez~Elias, Richard Green, Sona Mokra, Nicholas Fernando, Boxi Wu,
  Rachel Foley, Susannah Young, Iason Gabriel, William Isaac, John Mellor,
  Demis Hassabis, Koray Kavukcuoglu, Lisa~Anne Hendricks, and Geoffrey Irving.
\newblock Improving alignment of dialogue agents via targeted human judgements,
  2022.
\newblock \url{https://www.deepmind.com/blog/building-safer-dialogue-agents}.

\bibitem[Havasi et~al.(2021)Havasi, Jenatton, Fort, Liu, Snoek,
  Lakshminarayanan, Dai, and Tran]{havasi2021mimo}
Marton Havasi, Rodolphe Jenatton, Stanislav Fort, Jeremiah~Zhe Liu, Jasper
  Snoek, Balaji Lakshminarayanan, Andrew~Mingbo Dai, and Dustin Tran.
\newblock Training independent subnetworks for robust prediction.
\newblock In \emph{International Conference on Learning Representations
  ({ICLR})}, 2021.
\newblock \url{https://openreview.net/forum?id=OGg9XnKxFAH}.

\bibitem[Helber et~al.(2019)Helber, Bischke, Dengel, and Borth]{eurosat}
Patrick Helber, Benjamin Bischke, Andreas Dengel, and Damian Borth.
\newblock Eurosat: A novel dataset and deep learning benchmark for land use and
  land cover classification.
\newblock \emph{Journal of Selected Topics in Applied Earth Observations and
  Remote Sensing}, 2019.
\newblock \url{https://arxiv.org/abs/1709.00029}.

\bibitem[Higgins et~al.(2017)Higgins, Matthey, Pal, Burgess, Glorot, Botvinick,
  Mohamed, and Lerchner]{higgins2017betavae}
Irina Higgins, Loic Matthey, Arka Pal, Christopher Burgess, Xavier Glorot,
  Matthew Botvinick, Shakir Mohamed, and Alexander Lerchner.
\newblock beta-{VAE}: Learning basic visual concepts with a constrained
  variational framework.
\newblock In \emph{International Conference on Learning Representations
  (ICLR)}, 2017.
\newblock \url{https://openreview.net/forum?id=Sy2fzU9gl}.

\bibitem[Higgins et~al.(2018)Higgins, Amos, Pfau, Racani{\`{e}}re, Matthey,
  Rezende, and Lerchner]{higgins2018definition}
Irina Higgins, David Amos, David Pfau, S{\'{e}}bastien Racani{\`{e}}re,
  Lo{\"{\i}}c Matthey, Danilo~J. Rezende, and Alexander Lerchner.
\newblock Towards a definition of disentangled representations, 2018.
\newblock \url{http://arxiv.org/abs/1812.02230}.

\bibitem[Hu et~al.(2022)Hu, Shen, Wallis, Allen{-}Zhu, Li, Wang, Wang, and
  Chen]{hu2022lora}
Edward~J. Hu, Yelong Shen, Phillip Wallis, Zeyuan Allen{-}Zhu, Yuanzhi Li,
  Shean Wang, Lu~Wang, and Weizhu Chen.
\newblock {LoRA}: Low-rank adaptation of large language models.
\newblock In \emph{International Conference on Learning Representations
  (ICLR)}, 2022.
\newblock \url{https://openreview.net/forum?id=nZeVKeeFYf9}.

\bibitem[Ilharco et~al.(2021)Ilharco, Wortsman, Wightman, Gordon, Carlini,
  Taori, Dave, Shankar, Namkoong, Miller, Hajishirzi, Farhadi, and
  Schmidt]{openclip}
Gabriel Ilharco, Mitchell Wortsman, Ross Wightman, Cade Gordon, Nicholas
  Carlini, Rohan Taori, Achal Dave, Vaishaal Shankar, Hongseok Namkoong, John
  Miller, Hannaneh Hajishirzi, Ali Farhadi, and Ludwig Schmidt.
\newblock {OpenCLIP}, July 2021.
\newblock \url{https://doi.org/10.5281/zenodo.5143773}.

\bibitem[Ilharco et~al.(2022)Ilharco, Wortsman, Gadre, Song, Hajishirzi,
  Kornblith, Farhadi, and Schmidt]{ilharco2022patching}
Gabriel Ilharco, Mitchell Wortsman, Samir~Yitzhak Gadre, Shuran Song, Hannaneh
  Hajishirzi, Simon Kornblith, Ali Farhadi, and Ludwig Schmidt.
\newblock Patching open-vocabulary models by interpolating weights.
\newblock In \emph{Advances in Neural Information Processing Systems
  (NeurIPS)}, 2022.
\newblock \url{https://arXiv.org/abs/2208.05592}.

\bibitem[Ilharco et~al.(2023)Ilharco, Ribeiro, Wortsman, Gururangan, Schmidt,
  Hajishirzi, and Farhadi]{ilharco2023task}
Gabriel Ilharco, Marco~T{\'{u}}lio Ribeiro, Mitchell Wortsman, Suchin
  Gururangan, Ludwig Schmidt, Hannaneh Hajishirzi, and Ali Farhadi.
\newblock Editing models with task arithmetic.
\newblock In \emph{International Conference on Learning Representations
  (ICLR)}, 2023.
\newblock \url{https://arxiv.org/abs/2110.08207}.

\bibitem[Izmailov et~al.(2018)Izmailov, Podoprikhin, Garipov, Vetrov, and
  Wilson]{izmailov2018averaging}
Pavel Izmailov, Dmitrii Podoprikhin, Timur Garipov, Dmitry Vetrov, and
  Andrew~Gordon Wilson.
\newblock Averaging weights leads to wider optima and better generalization.
\newblock In \emph{Conference on Uncertainty in Artificial Intelligence (UAI)},
  2018.
\newblock \url{https://arxiv.org/abs/1803.05407}.

\bibitem[Jacot et~al.(2018)Jacot, Gabriel, and Hongler]{jacot2018neural}
Arthur Jacot, Franck Gabriel, and Cl{\'e}ment Hongler.
\newblock Neural tangent kernel: Convergence and generalization in neural
  networks.
\newblock In \emph{Advances in Neural Information Processing Systems
  (NeurIPS)}, 2018.
\newblock
  \url{https://proceedings.neurips.cc/paper_files/paper/2018/file/5a4be1fa34e62bb8a6ec6b91d2462f5a-Paper.pdf}.

\bibitem[Khot et~al.(2020)Khot, Clark, Guerquin, Jansen, and
  Sabharwal]{allenai:qasc}
Tushar Khot, Peter Clark, Michal Guerquin, Peter Jansen, and Ashish Sabharwal.
\newblock Qasc: A dataset for question answering via sentence composition,
  2020.
\newblock \url{https://arxiv.org/abs/1910.11473v2}.

\bibitem[Krause et~al.(2013)Krause, Stark, Deng, and Fei-Fei]{cars}
Jonathan Krause, Michael Stark, Jia Deng, and Li~Fei-Fei.
\newblock {3D Object representations for fine-grained categorization}.
\newblock In \emph{International Conference on Computer Vision Workshops
  (ICCVw)}, 2013.
\newblock
  \url{https://www.cv-foundation.org/openaccess/content_iccv_workshops_2013/W19/html/Krause_3D_Object_Representations_2013_ICCV_paper.html}.

\bibitem[LeCun(1998)]{lecun1998mnist}
Yann LeCun.
\newblock The {MNIST} database of handwritten digits, 1998.
\newblock \url{http://yann.lecun.com/exdb/mnist/}.

\bibitem[Lee et~al.(2019)Lee, Xiao, Schoenholz, Bahri, Novak, Sohl-Dickstein,
  and Pennington]{lee2019wide}
Jaehoon Lee, Lechao Xiao, Samuel Schoenholz, Yasaman Bahri, Roman Novak, Jascha
  Sohl-Dickstein, and Jeffrey Pennington.
\newblock Wide neural networks of any depth evolve as linear models under
  gradient descent.
\newblock In \emph{Advances in Neural Information Processing Systems
  (NeurIPS)}, 2019.
\newblock
  \url{https://proceedings.neurips.cc/paper_files/paper/2019/file/0d1a9651497a38d8b1c3871c84528bd4-Paper.pdf}.

\bibitem[Li et~al.(2022)Li, Gururangan, Dettmers, Lewis, Althoff, Smith, and
  Zettlemoyer]{li2022branch}
Margaret Li, Suchin Gururangan, Tim Dettmers, Mike Lewis, Tim Althoff, Noah~A
  Smith, and Luke Zettlemoyer.
\newblock Branch-train-merge: Embarrassingly parallel training of expert
  language models, 2022.
\newblock \url{https://arxiv.org/abs/2208.03306}.

\bibitem[Liu et~al.(2022)Liu, Mao, Wu, Feichtenhofer, Darrell, and
  Xie]{liu2022convnet}
Zhuang Liu, Hanzi Mao, Chao-Yuan Wu, Christoph Feichtenhofer, Trevor Darrell,
  and Saining Xie.
\newblock {A ConvNet for the 2020s}.
\newblock In \emph{IEEE Conference on Computer Vision and Pattern Recognition
  (CVPR)}, 2022.
\newblock \url{https://arxiv.org/abs/2201.03545}.

\bibitem[Locatello et~al.(2019)Locatello, Bauer, Lucic, R{\"{a}}tsch, Gelly,
  Sch{\"{o}}lkopf, and Bachem]{locatello2019challenging}
Francesco Locatello, Stefan Bauer, Mario Lucic, Gunnar R{\"{a}}tsch, Sylvain
  Gelly, Bernhard Sch{\"{o}}lkopf, and Olivier Bachem.
\newblock Challenging common assumptions in the unsupervised learning of
  disentangled representations.
\newblock In \emph{International Conference on Machine Learning ({ICML})},
  2019.
\newblock \url{http://proceedings.mlr.press/v97/locatello19a.html}.

\bibitem[Loshchilov and Hutter(2019)]{loshchilov2018decoupled}
Ilya Loshchilov and Frank Hutter.
\newblock Decoupled weight decay regularization.
\newblock In \emph{International Conference on Learning Representations
  (ICLR)}, 2019.
\newblock URL \url{https://openreview.net/forum?id=Bkg6RiCqY7}.

\bibitem[Lu et~al.(2022)Lu, Welleck, Jiang, Hessel, Qin, West, Ammanabrolu, and
  Choi]{lu2022quark}
Ximing Lu, Sean Welleck, Liwei Jiang, Jack Hessel, Lianhui Qin, Peter West,
  Prithviraj Ammanabrolu, and Yejin Choi.
\newblock {QUARK}: Controllable text generation with reinforced unlearning.
\newblock In \emph{Advances in Neural Information Processing Systems
  (NeurIPS)}, 2022.
\newblock \url{https://arxiv.org/abs/2205.13636}.

\bibitem[Maas et~al.(2011)Maas, Daly, Pham, Huang, Ng, and Potts]{maas2011imdb}
Andrew~L. Maas, Raymond~E. Daly, Peter~T. Pham, Dan Huang, Andrew~Y. Ng, and
  Christopher Potts.
\newblock Learning word vectors for sentiment analysis.
\newblock In \emph{Annual Meeting of the Association for Computational
  Linguistics (ACL)}, 2011.
\newblock \url{http://www.aclweb.org/anthology/P11-1015}.

\bibitem[Maddox et~al.(2021)Maddox, Tang, Moreno, Wilson, and
  Damianou]{maddox2021fast}
Wesley Maddox, Shuai Tang, Pablo~G. Moreno, Andrew~Gordon Wilson, and Andreas
  Damianou.
\newblock Fast adaptation with linearized neural networks.
\newblock In \emph{International Conference on Artificial Intelligence and
  Statistics (AISTATS)}, 2021.
\newblock \url{https://proceedings.mlr.press/v130/maddox21a/maddox21a.pdf}.

\bibitem[Malladi et~al.(2022)Malladi, Wettig, Yu, Chen, and
  Arora]{malladi2022kernel}
Sadhika Malladi, Alexander Wettig, Dingli Yu, Danqi Chen, and Sanjeev Arora.
\newblock A kernel-based view of language model fine-tuning, 2022.
\newblock \url{https://arxiv.org/abs/2210.05643}.

\bibitem[Mallya and Lazebnik(2018)]{mallya2018packnet}
Arun Mallya and Svetlana Lazebnik.
\newblock Packnet: Adding multiple tasks to a single network by iterative
  pruning.
\newblock In \emph{IEEE Conference on Computer Vision and Pattern Recognition
  (CVPR)}, 2018.
\newblock \url{https://arxiv.org/abs/1711.05769}.

\bibitem[Mallya et~al.(2018)Mallya, Davis, and Lazebnik]{mallya2018piggyback}
Arun Mallya, Dillon Davis, and Svetlana Lazebnik.
\newblock Piggyback: Adapting a single network to multiple tasks by learning to
  mask weights.
\newblock In \emph{European Conference on Computer Vision (ECCV)}, 2018.
\newblock \url{https://arxiv.org/abs/1801.06519}.

\bibitem[Masse et~al.(2018)Masse, Grant, and Freedman]{masse2018alleviating}
Nicolas~Y. Masse, Gregory~D. Grant, and David~J. Freedman.
\newblock Alleviating catastrophic forgetting using context-dependent gating
  and synaptic stabilization.
\newblock \emph{Proceedings of the National Academy of Science (PNAS)}, 2018.
\newblock \url{https://doi.org/10.1073/pnas.1803839115}.

\bibitem[Matena and Raffel(2021)]{matena2021merging}
Michael Matena and Colin Raffel.
\newblock Merging models with fisher-weighted averaging.
\newblock In \emph{Advances in Neural Information Processing Systems
  (NeurIPS)}, 2021.
\newblock \url{https://arxiv.org/abs/2111.09832}.

\bibitem[McCloskey and Cohen(1989)]{mccloskey1989catastrophic}
Michael McCloskey and Neal~J Cohen.
\newblock Catastrophic interference in connectionist networks: The sequential
  learning problem.
\newblock In \emph{Psychology of Learning and Motivation}. Elsevier, 1989.
\newblock
  \url{https://www.sciencedirect.com/science/article/abs/pii/S0079742108605368}.

\bibitem[Mu et~al.(2020)Mu, Liang, and Li]{mu2020gradients}
Fangzhou Mu, Yingyu Liang, and Yin Li.
\newblock Gradients as features for deep representation learning.
\newblock In \emph{International Conference on Learning Representations
  (ICLR)}, 2020.
\newblock \url{https://arxiv.org/abs/2004.05529}.

\bibitem[Netzer et~al.(2011)Netzer, Wang, Coates, Bissacco, Wu, and Ng]{svhn}
Yuval Netzer, Tao Wang, Adam Coates, Alessandro Bissacco, Bo~Wu, and Andrew~Y
  Ng.
\newblock Reading digits in natural images with unsupervised feature learning.
\newblock In \emph{Advances in Neural Information Processing Systems (NeurIPS)
  Workshops}, 2011.
\newblock
  \url{https://storage.googleapis.com/pub-tools-public-publication-data/pdf/37648.pdf}.

\bibitem[Novak et~al.(2022)Novak, Sohl-Dickstein, and
  Schoenholz]{novak2022fast}
Roman Novak, Jascha Sohl-Dickstein, and Samuel~S Schoenholz.
\newblock Fast finite width neural tangent kernel.
\newblock In \emph{International Conference on Machine Learning (ICML)}, 2022.
\newblock \url{https://arxiv.org/abs/2206.08720}.

\bibitem[Ortiz{-}Jim{\'{e}}nez et~al.(2021{\natexlab{a}})Ortiz{-}Jim{\'{e}}nez,
  Modas, Moosavi{-}Dezfooli, and Frossard]{ortizjimenez2021linear}
Guillermo Ortiz{-}Jim{\'{e}}nez, Apostolos Modas, Seyed{-}Mohsen
  Moosavi{-}Dezfooli, and Pascal Frossard.
\newblock What can linearized neural networks actually say about
  generalization?
\newblock In \emph{Advances in Neural Information Processing Systems
  (NeurIPS)}, 2021{\natexlab{a}}.
\newblock
  \url{https://proceedings.neurips.cc/paper_files/paper/2021/file/4b5deb9a14d66ab0acc3b8a2360cde7c-Paper.pdf}.

\bibitem[Ortiz{-}Jim{\'{e}}nez et~al.(2021{\natexlab{b}})Ortiz{-}Jim{\'{e}}nez,
  Modas, Moosavi{-}Dezfooli, and Frossard]{ortizjimenez2021optimism}
Guillermo Ortiz{-}Jim{\'{e}}nez, Apostolos Modas, Seyed{-}Mohsen
  Moosavi{-}Dezfooli, and Pascal Frossard.
\newblock Optimism in the face of adversity: Understanding and improving deep
  learning through adversarial robustness.
\newblock \emph{Proceedings of the IEEE}, 2021{\natexlab{b}}.
\newblock \url{https://ieeexplore.ieee.org/document/9348948}.

\bibitem[Ouyang et~al.(2022)Ouyang, Wu, Jiang, Almeida, Wainwright, Mishkin,
  Zhang, Agarwal, Slama, Ray, et~al.]{ouyang2022training}
Long Ouyang, Jeff Wu, Xu~Jiang, Diogo Almeida, Carroll~L Wainwright, Pamela
  Mishkin, Chong Zhang, Sandhini Agarwal, Katarina Slama, Alex Ray, et~al.
\newblock Training language models to follow instructions with human feedback,
  2022.
\newblock \url{https://arxiv.org/abs/2203.02155}.

\bibitem[Paccolat et~al.(2021)Paccolat, Petrini, Geiger, Tyloo, and
  Wyart]{paccolat2021compression}
Jonas Paccolat, Leonardo Petrini, Mario Geiger, Kevin Tyloo, and Matthieu
  Wyart.
\newblock Geometric compression of invariant manifolds in neural nets.
\newblock \emph{Journal of Statistical Mechanics: Theory and Experiment}, 2021.
\newblock \url{https://arxiv.org/abs/2007.11471}.

\bibitem[Paszke et~al.(2019)Paszke, Gross, Massa, Lerer, Bradbury, Chanan,
  Killeen, Lin, Gimelshein, Antiga, et~al.]{paszke2019pytorch}
Adam Paszke, Sam Gross, Francisco Massa, Adam Lerer, James Bradbury, Gregory
  Chanan, Trevor Killeen, Zeming Lin, Natalia Gimelshein, Luca Antiga, et~al.
\newblock {PyTorch}: An imperative style, high-performance deep learning
  library.
\newblock In \emph{Advances in Neural Information Processing Systems
  (NeurIPS)}, 2019.
\newblock
  \url{https://proceedings.neurips.cc/paper_files/paper/2019/file/bdbca288fee7f92f2bfa9f7012727740-Paper.pdf}.

\bibitem[Petrini et~al.(2022)Petrini, Cagnetta, Vanden-Eijnden, and
  Wyart]{petrini2022learning}
Leonardo Petrini, Francesco Cagnetta, Eric Vanden-Eijnden, and Matthieu Wyart.
\newblock Learning sparse features can lead to overfitting in neural networks.
\newblock In \emph{Advances in Neural Information Processing Systems
  (NeurIPS)}, 2022.
\newblock
  \url{https://proceedings.neurips.cc/paper_files/paper/2022/file/3d3a9e085540c65dd3e5731361f9320e-Paper-Conference.pdf}.

\bibitem[Radford et~al.(2019)Radford, Wu, Child, Luan, Amodei, and
  Sutskever]{radford2019language}
Alec Radford, Jeffrey Wu, Rewon Child, David Luan, Dario Amodei, and Ilya
  Sutskever.
\newblock {Language Models are Unsupervised Multitask Learners}, 2019.
\newblock \url{https://openai.com/blog/better-language-models/}.

\bibitem[Radford et~al.(2021)Radford, Kim, Hallacy, Ramesh, Goh, Agarwal,
  Sastry, Askell, Mishkin, Clark, Krueger, and Sutskever]{radford2021learning}
Alec Radford, Jong~Wook Kim, Chris Hallacy, Aditya Ramesh, Gabriel Goh,
  Sandhini Agarwal, Girish Sastry, Amanda Askell, Pamela Mishkin, Jack Clark,
  Gretchen Krueger, and Ilya Sutskever.
\newblock Learning transferable visual models from natural language
  supervision.
\newblock In \emph{International Conference on Machine Learning (ICML)}, 2021.
\newblock \url{https://arxiv.org/abs/2103.00020}.

\bibitem[Raffel et~al.(2020)Raffel, Shazeer, Roberts, Lee, Narang, Matena,
  Zhou, Li, and Liu]{colin2020exploring}
Colin Raffel, Noam Shazeer, Adam Roberts, Katherine Lee, Sharan Narang, Michael
  Matena, Yanqi Zhou, Wei Li, and Peter~J. Liu.
\newblock Exploring the limits of transfer learning with a unified text-to-text
  transformer.
\newblock \emph{Journal of Machine Learning Research (JMLR)}, 2020.
\newblock \url{http://jmlr.org/papers/v21/20-074.html}.

\bibitem[Ramé et~al.(2022{\natexlab{a}})Ramé, Ahuja, Zhang, Cord, Bottou, and
  Lopez-Paz]{ratatouille}
Alexandre Ramé, Kartik Ahuja, Jianyu Zhang, Matthieu Cord, Léon Bottou, and
  David Lopez-Paz.
\newblock Model ratatouille: Recycling diverse models for out-of-distribution
  generalization.
\newblock In \emph{International Conference on Machine Learning (ICML)},
  2022{\natexlab{a}}.
\newblock \url{https://arxiv.org/abs/2212.10445}.

\bibitem[Ramé et~al.(2022{\natexlab{b}})Ramé, Kirchmeyer, Rahier,
  Rakotomamonjy, Gallinari, and Cord]{diwa}
Alexandre Ramé, Matthieu Kirchmeyer, Thibaud Rahier, Alain Rakotomamonjy,
  Patrick Gallinari, and Matthieu Cord.
\newblock Diverse weight averaging for out-of-distribution generalization.
\newblock In \emph{Advances in Neural Information Processing Systems
  (NeurIPS)}, 2022{\natexlab{b}}.
\newblock \url{https://arxiv.org/abs/2205.09739}.

\bibitem[Rezende et~al.(2014)Rezende, Mohamed, and
  Wierstra]{rezende2014stochastic}
Danilo~Jimenez Rezende, Shakir Mohamed, and Daan Wierstra.
\newblock Stochastic backpropagation and approximate inference in deep
  generative models.
\newblock In \emph{International Conference on Machine Learning ({ICML})},
  2014.
\newblock \url{http://proceedings.mlr.press/v32/rezende14.html}.

\bibitem[Ribeiro and Lundberg(2022)]{ribeiro2022adaptive}
Marco~Tulio Ribeiro and Scott Lundberg.
\newblock Adaptive testing and debugging of nlp models.
\newblock In \emph{Annual Meeting of the Association for Computational
  Linguistics (ACL)}, 2022.
\newblock \url{https://aclanthology.org/2022.acl-long.230/}.

\bibitem[Ronen et~al.(2019)Ronen, Jacobs, Kasten, and
  Kritchman]{ronen2019convergence}
Basri Ronen, David Jacobs, Yoni Kasten, and Shira Kritchman.
\newblock The convergence rate of neural networks for learned functions of
  different frequencies.
\newblock In \emph{Advances in Neural Information Processing Systems
  (NeurIPS)}, 2019.
\newblock
  \url{https://proceedings.neurips.cc/paper/2019/file/5ac8bb8a7d745102a978c5f8ccdb61b8-Paper.pdf}.

\bibitem[Santurkar et~al.(2021)Santurkar, Tsipras, Elango, Bau, Torralba, and
  Madry]{shibani2021editing}
Shibani Santurkar, Dimitris Tsipras, Mahalaxmi Elango, David Bau, Antonio
  Torralba, and Aleksander Madry.
\newblock Editing a classifier by rewriting its prediction rules.
\newblock In \emph{Advances in Neural Information Processing Systems
  (NeurIPS)}, 2021.
\newblock
  \url{https://proceedings.neurips.cc/paper_files/paper/2021/file/c46489a2d5a9a9ecfc53b17610926ddd-Paper.pdf}.

\bibitem[Sch{\"{o}}lkopf and Smola(2002)]{scholkopf2002kernels}
Bernhard Sch{\"{o}}lkopf and Alexander~Johannes Smola.
\newblock \emph{Learning with Kernels: support vector machines, regularization,
  optimization, and beyond}.
\newblock Adaptive computation and machine learning series. {MIT} Press, 2002.
\newblock \url{https://mitpress.mit.edu/9780262536578/learning-with-kernels/}.

\bibitem[Schuhmann et~al.(2021)Schuhmann, Vencu, Beaumont, Kaczmarczyk, Mullis,
  Katta, Coombes, Jitsev, and Komatsuzaki]{laion}
Christoph Schuhmann, Richard Vencu, Romain Beaumont, Robert Kaczmarczyk,
  Clayton Mullis, Aarush Katta, Theo Coombes, Jenia Jitsev, and Aran
  Komatsuzaki.
\newblock Laion-400m: Open dataset of clip-filtered 400 million image-text
  pairs, 2021.
\newblock \url{https://arxiv.org/abs/2111.02114}.

\bibitem[Singh and Jaggi(2020)]{singh2020fusion}
Sidak~Pal Singh and Martin Jaggi.
\newblock Model fusion via optimal transport.
\newblock In \emph{Advances in Neural Information Processing Systems
  (NeurIPS)}, 2020.
\newblock
  \url{https://proceedings.neurips.cc/paper/2020/hash/fb2697869f56484404c8ceee2985b01d-Abstract.html}.

\bibitem[Stallkamp et~al.(2011)Stallkamp, Schlipsing, Salmen, and Igel]{gtsrb}
Johannes Stallkamp, Marc Schlipsing, Jan Salmen, and Christian Igel.
\newblock The german traffic sign recognition benchmark: a multi-class
  classification competition.
\newblock In \emph{International Joint Conference on Neural Networks (IJCNN)},
  2011.
\newblock \url{https://ieeexplore.ieee.org/document/6033395}.

\bibitem[Tancik et~al.(2020)Tancik, Srinivasan, Mildenhall, Fridovich{-}Keil,
  Raghavan, Singhal, Ramamoorthi, Barron, and Ng]{tancik2020fourier}
Matthew Tancik, Pratul~P. Srinivasan, Ben Mildenhall, Sara Fridovich{-}Keil,
  Nithin Raghavan, Utkarsh Singhal, Ravi Ramamoorthi, Jonathan~T. Barron, and
  Ren Ng.
\newblock Fourier features let networks learn high frequency functions in low
  dimensional domains.
\newblock In \emph{Advances in Neural Information Processing Systems
  (NeurIPS)}, 2020.
\newblock
  \url{https://proceedings.neurips.cc/paper/2020/file/55053683268957697aa39fba6f231c68-Paper.pdf}.

\bibitem[Vyas et~al.(2022)Vyas, Bansal, and Nakkiran]{vyas2022limitations}
Nikhil Vyas, Yamini Bansal, and Preetum Nakkiran.
\newblock Limitations of the {NTK} for understanding generalization in deep
  learning, 2022.
\newblock \url{https://arxiv.org/abs/2206.10012}.

\bibitem[Wen et~al.(2020)Wen, Tran, and Ba]{wen2020batchensemble}
Yeming Wen, Dustin Tran, and Jimmy Ba.
\newblock Batchensemble: an alternative approach to efficient ensemble and
  lifelong learning.
\newblock In \emph{International Conference on Learning Representations
  (ICLR)}, 2020.
\newblock \url{https://arxiv.org/abs/2002.06715}.

\bibitem[Wolf et~al.(2019)Wolf, Debut, Sanh, Chaumond, Delangue, Moi, Cistac,
  Rault, Louf, Funtowicz, et~al.]{wolf2019huggingface}
Thomas Wolf, Lysandre Debut, Victor Sanh, Julien Chaumond, Clement Delangue,
  Anthony Moi, Pierric Cistac, Tim Rault, R{\'e}mi Louf, Morgan Funtowicz,
  et~al.
\newblock Huggingface's transformers: State-of-the-art natural language
  processing, 2019.
\newblock \url{https://arxiv.org/abs/1910.03771}.

\bibitem[Wortsman et~al.(2020)Wortsman, Ramanujan, Liu, Kembhavi, Rastegari,
  Yosinski, and Farhadi]{wortsman2020supermasks}
Mitchell Wortsman, Vivek Ramanujan, Rosanne Liu, Aniruddha Kembhavi, Mohammad
  Rastegari, Jason Yosinski, and Ali Farhadi.
\newblock Supermasks in superposition.
\newblock In \emph{Advances in Neural Information Processing Systems
  (NeurIPS)}, 2020.
\newblock
  \url{https://proceedings.neurips.cc/paper/2020/file/ad1f8bb9b51f023cdc80cf94bb615aa9-Paper.pdf}.

\bibitem[Wortsman et~al.(2022{\natexlab{a}})Wortsman, Ilharco, Gadre, Roelofs,
  Gontijo-Lopes, Morcos, Namkoong, Farhadi, Carmon, Kornblith,
  et~al.]{wortsman2022model}
Mitchell Wortsman, Gabriel Ilharco, Samir~Yitzhak Gadre, Rebecca Roelofs,
  Raphael Gontijo-Lopes, Ari~S Morcos, Hongseok Namkoong, Ali Farhadi, Yair
  Carmon, Simon Kornblith, et~al.
\newblock Model soups: averaging weights of multiple fine-tuned models improves
  accuracy without increasing inference time.
\newblock In \emph{International Conference on Machine Learning (ICML)},
  2022{\natexlab{a}}.
\newblock \url{https://arxiv.org/abs/2203.05482}.

\bibitem[Wortsman et~al.(2022{\natexlab{b}})Wortsman, Ilharco, Li, Kim,
  Hajishirzi, Farhadi, Namkoong, and Schmidt]{wortsman2021robust}
Mitchell Wortsman, Gabriel Ilharco, Mike Li, Jong~Wook Kim, Hannaneh
  Hajishirzi, Ali Farhadi, Hongseok Namkoong, and Ludwig Schmidt.
\newblock Robust fine-tuning of zero-shot models.
\newblock In \emph{IEEE Conference on Computer Vision and Pattern Recognition
  (CVPR)}, 2022{\natexlab{b}}.
\newblock \url{https://arxiv.org/abs/2109.01903}.

\bibitem[Xiao et~al.(2016)Xiao, Ehinger, Hays, Torralba, and Oliva]{sun397}
Jianxiong Xiao, Krista~A Ehinger, James Hays, Antonio Torralba, and Aude Oliva.
\newblock Sun database: Exploring a large collection of scene categories.
\newblock \emph{International Journal of Computer Vision (IJCV)}, 2016.
\newblock \url{https://link.springer.com/article/10.1007/s11263-014-0748-y}.

\bibitem[Yadav et~al.(2023)Yadav, Tam, Choshen, Raffel, and Bansal]{ties}
Prateek Yadav, Derek Tam, Leshem Choshen, Colin Raffel, and Mohit Bansal.
\newblock Resolving interference when merging models.
\newblock In \emph{Advances in Neural Information Processing Systems
  (NeurIPS)}, 2023.
\newblock \url{https://arxiv.org/abs/2306.01708}.

\bibitem[Y{\"{u}}ce et~al.(2022)Y{\"{u}}ce, Ortiz{-}Jim{\'{e}}nez, Besbinar,
  and Frossard]{yuce2022inrs}
Gizem Y{\"{u}}ce, Guillermo Ortiz{-}Jim{\'{e}}nez, Beril Besbinar, and Pascal
  Frossard.
\newblock A structured dictionary perspective on implicit neural
  representations.
\newblock In \emph{{IEEE} Conference on Computer Vision and Pattern Recognition
  ({CVPR})}, 2022.
\newblock \url{https://doi.org/10.1109/CVPR52688.2022.01863}.

\bibitem[Zancato et~al.(2020)Zancato, Achille, Ravichandran, Bhotika, and
  Soatto]{zancato2020predicting}
Luca Zancato, Alessandro Achille, Avinash Ravichandran, Rahul Bhotika, and
  Stefano Soatto.
\newblock Predicting training time without training.
\newblock In \emph{Advances in Neural Information Processing Systems
  (NeurIPS)}, 2020.
\newblock
  \url{https://proceedings.neurips.cc/paper/2020/hash/440e7c3eb9bbcd4c33c3535354a51605-Abstract.html}.

\bibitem[Zhuang et~al.(2020)Zhuang, Qi, Duan, Xi, Zhu, Zhu, Xiong, and
  He]{zhuang2020comprehensive}
Fuzhen Zhuang, Zhiyuan Qi, Keyu Duan, Dongbo Xi, Yongchun Zhu, Hengshu Zhu, Hui
  Xiong, and Qing He.
\newblock A comprehensive survey on transfer learning.
\newblock \emph{Proceedings of the IEEE}, 2020.
\newblock \url{https://ieeexplore.ieee.org/document/9134370}.

\end{thebibliography}

\clearpage
\appendix

\section{Experimental details}\label{ap:experiment_details}

All our experiments were performed using the same hardware consisting of four V100 NVIDIA GPUs with 32GB of memory each and can be reproduced in less than 350 GPU hours. The details of each experiment are the following.

\paragraph{Fine-tuning.} All the fine-tuning experiments follow the same training protocol specified in \citet{ilharco2023task} with minor modifications to the training code to use linearized models when needed. In particular, we fine-tune all datasets starting from the same CLIP pre-trained checkpoint downloaded from the \texttt{open\_clip} repository~\cite{openclip}. We fine-tune for $2,000$ iterations with a batch size of $128$, learning rate of $10^{-5}$ and a cosine annealing learning rate schedule with $200$ warm-up steps and the AdamW optimizer~\cite{loshchilov2018decoupled}. As introduced in \citet{ilharco2022patching}, during fine-tuning, we freeze the weights of the classification layer obtained by encoding a standard set of \textit{zero-shot} template prompts for each dataset. Freezing this layer does not harm accuracy and ensures that no additional learnable parameters are introduced during fine-tuning~\cite{ilharco2022patching}. We use this exact same protocol to fine-tune the non-linear and linearized models and do not perform any form of hyperparameter search in our experiments.\looseness=-1

\paragraph{Tuning of $\alpha$ in task arithmetic benchmarks.} As in \citet{ilharco2023task} we use a single coefficient $\alpha$ to tune the size of the task vectors used to modify the pre-trained models. This is equivalent to setting $\alpha=\alpha_1=\dots\alpha_T$ in \cref{eq:task_arithmetic}. Both in the task addition and task negation benchmarks, after fine-tuning, we evaluate different scaling coefficients $\alpha\in\{0.0, 0.05, 0.1, \dots,1.0\}$ and choose the value that achieves the highest target metric on a small held-out proportion of the training set as specified in \citet{ilharco2023task}. Namely, maximum normalized average accuracy, and minimum target accuracy on each dataset that still retains at least $95\%$ of the accuracy of the pre-trained model on the control task; for task addition and negation, respectively. The tuning of $\alpha$ is done independently for non-linear FT, linearized FT, and post-hoc linearization.

\paragraph{Normalized accuracies in task addition.} \Cref{tab:task_addition} shows the normalized accuracies after editing different models by adding the sum of the task vectors on $8$ tasks $\vt=\sum_t\vt_t$. Here, the normalization is performed with respect to the single-task accuracies achieved by the model fine-tuned on each task. Mathematically,
\begin{equation}
    \text{Normalized accuracy}=\cfrac{1}{T}\sum_{t=1}^T\frac{\underset{\vx\sim\mu_t}{\operatorname{acc}}\left[f(\vx;\vth_0+\sum_{t'} \vt_{t'})\right]}{\underset{\vx\sim\mu_t}{\operatorname{acc}}\left[f(\vx;\vth_0+\vt_t)\right]}.
\end{equation}

\paragraph{Disentanglement error.} To produce the weight disentanglement visualizations of \Cref{fig:heatmaps} we compute the value of $\xi(\alpha_1,\alpha_2)$ on a $20\times 20$ grid of equispaced values in $[-3, 3]\times[-3,3]$. To estimate the disentanglement error, we use a random subset of $2,048$ test points for each dataset.

\paragraph{NTK eigenfunction estimation.} We use the finite-width NTK implementation from the \texttt{functorch} sublibrary of PyTorch~\cite{paszke2019pytorch} to compute the $K_\text{NTK}$ matrices described in \Cref{sec:spectrum_ntk}. In particular, we use a random subset of $200$ training points for each dataset and compute the singular value decomposition (SVD) of $K_\text{NTK}$ to estimate the entries of $\phi_\rho$ on each dataset. As described in \citet{bordelon2020spectrum}, and to avoid a high memory footprint, we estimate a different set of singular vectors for each output class, equivalent to estimating one kernel matrix per output logit. \Cref{fig:localization} shows the values of $\mathcal{E}_\text{loc}(\vx)$ for each class with a different line. However, there is little variability of the NTK among classes, and hence all curves appear superimposed in the figure.

\section{Implementation aspects of linearized models}\label{ap:computational_costs}

We now provide more details of the different implementation aspects of linearized models, including basic code and a discussion on their computational complexity.

\paragraph{Practical implementation.} Creating linearized models of a neural network is very simple using the \texttt{functorch} sublibrary of PyTorch. Specifically, using the fast Jacobian-vector implementation of this library, we can easily create a custom class that takes any \texttt{nn.Module} as input and generates a trainable linearized version of it around its initialization. We give a simple example of this in \Cref{listing:linearized}, where we see that the resulting \texttt{LinearizedModel} can be directly used in any training script as any other neural network.

In our experiments, we linearize the ViT image encoder of CLIP as the text encoder is frozen in our experiments. In this regard, during training and inference, as it is common in standard CLIP models~\cite{radford2021learning}, we normalize the output of the linearized image encoder prior to performing the inner product with the text embeddings. This normalization does not change the classification decision during inference, but it has a rescaling effect on the loss that can influence training dynamics. In our fine-tuning experiments, we found this standard normalization technique has a clearly positive effect in single-task accuracy both for the non-linear and linearized models.

\begin{listing}[!ht]
\begin{minted}[
framesep=2mm,
baselinestretch=1.2,
bgcolor=LightGray,
fontsize=\footnotesize,
linenos]{python}
import copy
import torch.nn as nn
from functorch import jvp, make_functional_with_buffers

class LinearizedModel(nn.Module):
    """ Creates a linearized version of any nn.Module.
        
    The linearized version of a model is a proper PyTorch model and can be
    trained as any other nn.Module.

    Args:
        init_model (nn.Module): The model to linearize. Its parameters are 
        used to initialized the linearized model.
    """
    def __init__(self, init_model):
        # Convert models to functional form.
        func, params0, buffers0 = make_functional_with_buffers(init_model)

        # Store parameters and forward function.
        self.func0 = lambda params, x: func(params, buffers0, x)
        self.params0 = params0                  # Initialization parameters.
        self.params = copy.deepcopy(params0)    # Trainable parameters.

        # Freeze initial parameters and unfreeze current parameters.
        for p0 in self.params0: p0.requires_grad = False
        for p in self.params: p.requires_grad = True

    def __call__(self, x):
        # Compute linearized model output.
        dparams = [p - p0 for p, p0 in zip(self.params, self.params0)]
        out, dp = jvp(self.func0, (self.params0,), (dparams,))
        return out + dp
\end{minted}
\caption{Basic PyTorch code to linearize a model.}
\label{listing:linearized}
\end{listing}

\paragraph{Computational complexity.} Jacobian-vector products can be computed efficiently, at the same marginal cost as a forward pass, using forward-mode automatic differentiation rules~\cite{gunes2017ad}. This means that doing inference with a linearized model usually takes around two or three times more than with its non-linear counterpart, as for every intermediate operation in the forward pass, its derivative also needs to be computed and evaluated. 

Training the linearized models, on the other hand, uses the backpropagation algorithm which, for every forward pass, requires another backward pass to compute the gradients. In this regard, the computational cost of obtaining the gradient with respect to the trainable parameters of the linearized models $\nabla_{\vth} f_{\rm lin}(\vx;\vth)$ is also roughly twice the cost of obtaining the gradient of its non-linear counterparts $\nabla_{\vth} f(\vx;\vth)$. Similarly, as the forward-mode differentiation required to compute the forward pass also depends on the values of the derivatives at this step, the final memory footprint of training with the linearized models is also double than the one of training the non-linear ones.

\section{Spectral analysis of linearized models}\label{ap:spectral}

In this section, we present the formal statement and proof of \Cref{propo:localization}. Additionally, we delve deeper into the question of whether eigenfunction localization is a necessary condition for task arithmetic and provide analytical examples with exactly diagonalizable NTKs to support our discussion.\looseness=-1

\begin{proposition}[Formal version of \Cref{propo:localization}]
    \label{propo:localization_formal}
    Suppose that the task functions $\{f^\star_t\}_{t\in[T]}$ belong to the RKHS of the kernel $k$ and their coefficients in the kernel eigenbasis are $\{(c^\star_{t,\rho})_{\rho \in \mathbb{N}}\}_{t\in[T]}$. If $\forall \, t, \rho$, either $c^\star_{t,\rho}=0$ or $\operatorname{supp}(\phi_{\rho}) \subseteq \mathcal{D}_t$, then the kernel $k$ has the task arithmetic property with respect to $\{f^\star_t\}_{t\in[T]}$ and $\{\mathcal{D}_t\}_{t\in[T]}$ .
\end{proposition}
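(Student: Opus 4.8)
The plan is to reduce the assertion about the kernel $k$ to an elementary argument inside its RKHS $\mathcal{H}$, exploiting only the disjointness of the supports $\{\D_t\}$. First I would make the weight-space/function-space correspondence explicit. Fine-tuning the linearized model $f_{\rm lin}$ (\cref{eq:f_lin}) on task $t$ is kernel regression with $k$, so by the representer theorem the fitted correction $h_t(\vx):=\vt_t^\top\nabla_{\vth}f(\vx;\vth_0)$ lies in $\mathcal{H}$; under the proposition's hypothesis that $f^\star_t$ is representable by $k$ (well-specified regression), $h_t$ equals the eigen-expansion of $f^\star_t$, i.e. $h_t=\sum_{\rho}c^\star_{t,\rho}\phi_\rho$, after absorbing — if necessary — the pre-trained offset $f(\cdot;\vth_0)$ into the target (this is the one piece of bookkeeping that needs care; see below). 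Consequently the edited linearized predictor is
\[
f_{\rm lin}\!\left(\vx;\vth_0+\sum_{t=1}^T\alpha_t\vt_t\right)=f(\vx;\vth_0)+\sum_{t=1}^T\alpha_t\,h_t(\vx).
\]

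Second, I would prove the localization lemma $\operatorname{supp}(h_t)\subseteq\D_t$. Indeed $h_t=\sum_{\rho:\,c^\star_{t,\rho}\neq 0}c^\star_{t,\rho}\phi_\rho$, and by hypothesis every eigenfunction occurring in this sum satisfies $\operatorname{supp}(\phi_\rho)\subseteq\D_t$; since the support of a (convergent) sum of functions is contained in the closure of the union of their supports, $\operatorname{supp}(h_t)\subseteq\overline{\D_t}=\D_t$ (taking the $\D_t$ closed, or working up to $\mu_t$-null sets). In particular $h_{t'}(\vx)=0$ whenever $\vx\notin\D_{t'}$.

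Third, I would conclude by a case split on $\vx$, using that the $\D_t$ are pairwise disjoint (the hypothesis of \cref{prop:task_arithmetic}). If $\vx\in\D_t$ for the unique such $t$, then $h_{t'}(\vx)=0$ for all $t'\neq t$, hence
\[
f_{\rm lin}\!\left(\vx;\vth_0+\sum_{t'}\alpha_{t'}\vt_{t'}\right)=f(\vx;\vth_0)+\alpha_t\,h_t(\vx)=f_{\rm lin}(\vx;\vth_0+\alpha_t\vt_t).
\]
If $\vx\notin\bigcup_t\D_t$, then $h_{t'}(\vx)=0$ for every $t'$, so $f_{\rm lin}(\vx;\vth_0+\sum_{t'}\alpha_{t'}\vt_{t'})=f(\vx;\vth_0)$. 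This is exactly \cref{eq:task_arithmetic}, and since no constraint on the coefficients was used, it holds with $\mathcal{A}=\R^T$; equivalently (\cref{prop:weight_disentanglement}) the kernel is weight disentangled with the obvious choice $g_t=\alpha_t h_t$, $g_0=f(\cdot;\vth_0)$.

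The genuinely delicate part is not the localization step — that is essentially immediate — but making rigorous the identification of the weight-space task vector $\vt_t$ with the RKHS function $\sum_\rho c^\star_{t,\rho}\phi_\rho$: one must specify the regression setting (population limit, or exact interpolation of a well-specified target) under which fine-tuning $f_{\rm lin}$ to convergence recovers precisely those eigen-coefficients, and handle the offset $f(\cdot;\vth_0)$ (ideally one assumes it contributes no components outside $\bigcup_t\D_t$, or folds it into the target). A second, milder technicality is the measure-theoretic meaning of $\operatorname{supp}$ for an infinite eigenfunction series — convergence in $\mathcal{H}$ versus pointwise/$\mu_t$-a.e., and the handling of null sets — which the main-body statement glosses over and the appendix version should pin down.
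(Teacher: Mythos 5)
Your proof is correct and follows essentially the same route as the paper: both arguments reduce to observing that the localization hypothesis forces each task function $\sum_{\rho} c^\star_{t,\rho}\phi_\rho$ to vanish outside $\D_t$, so that on any $\D_{t'}$ (or outside all supports) the cross-task contributions drop out — the paper phrases this as the vanishing of an $L^2$ integral over $\D_{t'}$, you phrase it as a pointwise case split, which is a cosmetic difference. The weight-space-to-RKHS bookkeeping you flag as delicate is simply elided in the paper, which states and proves the proposition directly in function space by reading ``the kernel has the task arithmetic property'' as $\sum_{t} f^\star_t = f^\star_{t'}$ on each $\D_{t'}$.
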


\begin{proof}
    The task arithmetic property requires that $\forall t'\in[T],\; \forall \vx \in \mathcal{D}_{t'},\; \sum_{t\in[T]} f^\star_t(\vx) = f^\star_{t'}(\vx)$. Representing the task functions in the kernel basis, we have
    \begin{equation}
        \forall t'\in[T],\; \forall \vx \in \mathcal{D}_{t'},\; \sum_{t\in[T]} \sum_{\rho \in \mathbb{N}} c^\star_{t,\rho} \phi_{\rho}(\vx) = \sum_{\rho \in \mathbb{N}} c^\star_{t',\rho} \phi_{\rho}(\vx).
    \end{equation}
    This condition can be rewritten as
    \begin{equation}
    \label{eq:condition}
        \int_{\,\mathcal{D}_{t'}} \left(\sum_{t\in[T],\,t\neq t'} \sum_{\rho \in \mathbb{N}} c^\star_{t,\rho} \phi_{\rho}(\vx)\right)^2 d\vx = 0.
    \end{equation}
    If, for each $t$, the eigenfunctions corresponding to non-zero coefficients are supported within a subset of $\mathcal{D}_t$ and all domains $\mathcal{D}_t$'s are disjoint, then all the summands inside the integral in \Cref{eq:condition} become zero inside $\mathcal{D}_{t'}$, and thus the proof is complete.
\end{proof}

As we discussed in \Cref{sec:spectrum_ntk}, eigenfunction localization is generally not a necessary condition to achieve task arithmetic. However, we now show that if the eigenfunctions are locally linear independent across the different task domains, then the localization property becomes a necessary condition for task arithmetic. The proposition presented below formalizes this concept.

\begin{proposition}
    \label{propo:localization_equivalence}
    Suppose that the task functions $\{f^\star_t\}_{t\in[T]}$ belong to the RKHS of the kernel $k$ and their coefficients in the kernel eigenbasis are $\{(c^\star_{t,\rho})_{\rho \in \mathbb{N}}\}_{t\in[T]}$. Furthermore, let the kernel eigenfunctions be either zero or linearly independent over each domain $\mathcal{D}_t$. The kernel $k$ has the task arithmetic property with respect to $\{f^\star_t\}_{t\in[T]}$ and $\{\mathcal{D}_t\}_{t\in[T]}$ if and only if $\forall \, t, \rho$, either $c^\star_{t,\rho}=0$ or $\operatorname{supp}(\phi_{\rho}) \subseteq \mathcal{D}_t$.
\end{proposition}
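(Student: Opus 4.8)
The statement is an equivalence, and the direction ``localization condition $\Rightarrow$ task arithmetic'' is precisely \Cref{propo:localization_formal}. The plan is therefore to take that direction for granted and to prove only the converse: assuming the kernel $k$ has the task arithmetic property with respect to $\{f^\star_t\}_{t\in[T]}$ and $\{\mathcal{D}_t\}_{t\in[T]}$, and that the eigenfunctions are either zero or linearly independent over each $\mathcal{D}_t$, deduce that $c^\star_{t,\rho}\neq 0$ forces $\operatorname{supp}(\phi_\rho)\subseteq\mathcal{D}_t$.

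First I would recover the functional identity behind task arithmetic exactly as in the proof of \Cref{propo:localization_formal}: for every $t'\in[T]$ the function $\sum_{t\neq t'}f^\star_t$ vanishes on $\mathcal{D}_{t'}$, which is \cref{eq:condition}. Expanding each $f^\star_t$ in the eigenbasis and collecting terms by eigenfunction turns this into $\sum_{\rho}\big(\sum_{t\neq t'}c^\star_{t,\rho}\big)\,\phi_\rho(\vx)=0$ for a.e.\ $\vx\in\mathcal{D}_{t'}$. On $\mathcal{D}_{t'}$ only the eigenfunctions in $S_{t'}:=\{\rho:\phi_\rho|_{\mathcal{D}_{t'}}\not\equiv 0\}$ survive, and by hypothesis these are linearly independent over $\mathcal{D}_{t'}$, which yields the family of scalar identities
\begin{equation}
\sum_{t\neq t'}c^\star_{t,\rho}=0\qquad\text{for every }t'\in[T]\text{ and every }\rho\in S_{t'}.
\label{eq:plan_relations}
\end{equation}
Separately, since each $f^\star_t$ is defined on (hence supported in) $\mathcal{D}_t$ and the eigenfunctions $\{\phi_\rho\}$ are mutually orthogonal, the coefficient $c^\star_{t,\rho}$ is proportional to $\langle f^\star_t,\phi_\rho\rangle$ and therefore vanishes whenever $\phi_\rho\equiv 0$ on $\mathcal{D}_t$; equivalently, $c^\star_{t,\rho}\neq 0$ forces $\rho\in S_t$.

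Next I would fix an eigenfunction $\phi_\rho$ and set $S_\rho=\{t\in[T]:\rho\in S_t\}$, the set of tasks whose domain carries a nonzero restriction of $\phi_\rho$. Subtracting \cref{eq:plan_relations} for two indices $t_1,t_2\in S_\rho$ gives $c^\star_{t_1,\rho}=c^\star_{t_2,\rho}$, so the coefficients $\{c^\star_{t,\rho}\}_{t\in S_\rho}$ all equal a common value $C$; substituting this back into \cref{eq:plan_relations} for any $t'\in S_\rho$ and discarding the $t\notin S_\rho$ terms (which vanish by the orthogonality observation) leaves $(|S_\rho|-1)\,C=0$. If $|S_\rho|\neq 1$, then $C=0$ (vacuously when $S_\rho=\varnothing$), and hence $c^\star_{t,\rho}=0$ for every $t$ (using the orthogonality observation for $t\notin S_\rho$), so the localization claim for this $\rho$ holds trivially. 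If $|S_\rho|=1$, say $S_\rho=\{t_0\}$, then $\phi_\rho$ vanishes on every $\mathcal{D}_{t'}$ with $t'\neq t_0$, i.e.\ $\operatorname{supp}(\phi_\rho)\subseteq\mathcal{D}_{t_0}$, while the orthogonality observation gives $c^\star_{t,\rho}=0$ for all $t\neq t_0$. In either case $c^\star_{t,\rho}\neq 0$ entails $\operatorname{supp}(\phi_\rho)\subseteq\mathcal{D}_t$, which completes the converse.

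The step I expect to be the crux is controlling eigenfunctions that straddle several domains: task arithmetic only constrains the aggregated sums $\sum_{t\neq t'}c^\star_{t,\rho}$, not the individual coefficients. The orthogonality/support observation is what breaks this deadlock -- it confines the relevant sums to the finite index set $S_\rho$ and reduces \cref{eq:plan_relations} to a small linear system whose only solution is ``all coefficients equal'', after which a counting argument forces either total vanishing or single-domain support. The local linear independence hypothesis is precisely what makes \cref{eq:plan_relations} available in the first place; without it one cannot pass from a vanishing linear combination of restricted eigenfunctions to vanishing coefficients, consistent with the counterexamples noted in the Remark of \Cref{sec:spectrum_ntk}. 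As in \Cref{propo:localization_formal}, one should also be mildly careful interpreting ``$\phi_\rho\equiv 0$ on $\mathcal{D}_t$'' and ``$\operatorname{supp}(\phi_\rho)\subseteq\mathcal{D}_t$'' up to null sets.
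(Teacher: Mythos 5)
Your proof is correct and shares its skeleton with the paper's: the ``if'' direction is delegated to \Cref{propo:localization_formal}, the ``only if'' direction starts from \cref{eq:condition}, and local linear independence is used to extract scalar constraints on the coefficients. Where you genuinely diverge is at the crux you yourself flag: \cref{eq:condition} only yields the aggregated relations $\sum_{t\neq t'}c^\star_{t,\rho}=0$ for $\rho$ with $\phi_\rho|_{\mathcal{D}_{t'}}\not\equiv 0$, leaving open the possibility that coefficients of \emph{different} tasks cancel on a non-localized eigenfunction. The paper's proof sidesteps this by asserting a dichotomy (either all coefficients $c^\star_{t,\rho}$ with $t\neq t'$ vanish, or the corresponding eigenfunctions vanish on $\mathcal{D}_{t'}$) and dismissing the first branch with a genericity remark; cross-task cancellation is never addressed. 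You rule it out explicitly via the orthogonality observation that $c^\star_{t,\rho}=0$ whenever $\phi_\rho$ vanishes on $\mathcal{D}_t$, which confines each relation to the finite index set $S_\rho$ and lets a short linear-algebra argument force either total vanishing or $|S_\rho|=1$. This is a real improvement rather than a cosmetic one: without that observation the ``only if'' claim can actually fail for $T\geq 3$ (take $\phi_\rho$ supported only in $\mathcal{D}_1$ with $c^\star_{2,\rho}=-c^\star_{3,\rho}\neq 0$; all constraints imposed by task arithmetic are met while the conclusion is violated), so your reading of $c^\star_{t,\rho}$ as the projection of the zero-extension of $f^\star_t$ is an interpretive assumption the statement silently needs, and you should state it as such. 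The remaining caveats you mention --- passing from a vanishing infinite linear combination to vanishing coefficients, and null-set issues around ``support'' --- are shared with the paper's own proof and do not count against you.
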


\begin{proof}
    The initial steps of the proofs follow those of the previous proposition. In particular, let's consider the integral in \Cref{eq:condition}. Due to the linear independence of the non-zero kernel eigenfunctions on $\mathcal{D}_{t'}$, for this integral to be zero, we have only two possibilities: either \textit{i)} all coefficients $\{(c^\star_{t,\rho})_{\rho \in \mathbb{N}}\}_{t\in[T],\,t\neq t'}$ must be zero or \textit{ii)} the eigenfunctions corresponding to non-zero coefficient $c^\star_{t,\rho}$ ($t \neq t'$) must be zero in $\mathcal{D}_{t'}$. Since the proposition is valid for any set of functions, condition \textit{i)} is not feasible. Therefore, condition \textit{ii)} must hold. Furthermore, since \Cref{eq:condition} is valid $\forall t'\in[T]$, it follows that the eigenfunctions used to represent each task $t'$ are zero in $\overline{\mathcal{D}}_{t'}=\bigcup_{t\in[T],\,t\neq t'} \mathcal{D}_t$. Consequently, these eigenfunctions are only supported in $\mathcal{D}_{t'}$ or a subset thereof.\looseness=-1
\end{proof}

In order to understand the implications of this proposition, it is useful to examine simple data geometries and architectures for which the NTK can be analytically diagonalized. For instance, when data is uniformly distributed on a ring or a torus, the NTK of fully-connected and convolutional neural networks at initialization can be diagonalized with the Fourier series~\cite{ronen2019convergence,cagnetta2022can, geifman2022spectral, favero2021locality}. Fourier atoms are linearly independent on any interval~\cite{christensen2006linear} and not localized. Consequently, according to \Cref{propo:localization_equivalence}, these architectures cannot perform task arithmetic within such settings. This straightforward calculation aligns with the observation that task arithmetic generally emerges as a property of pre-training and is not inherently present at initialization, as we numerically demonstrated for CLIP models in \Cref{sec:random_init}.

\newpage

\section{Further experimental results}

We now present additional experiments that expand the findings discussed in the main text.\looseness=-1

\begin{figure}[!ht]
\centering
\begin{subfigure}{0.75\textwidth}
    \centering
    \includegraphics[width=\textwidth]{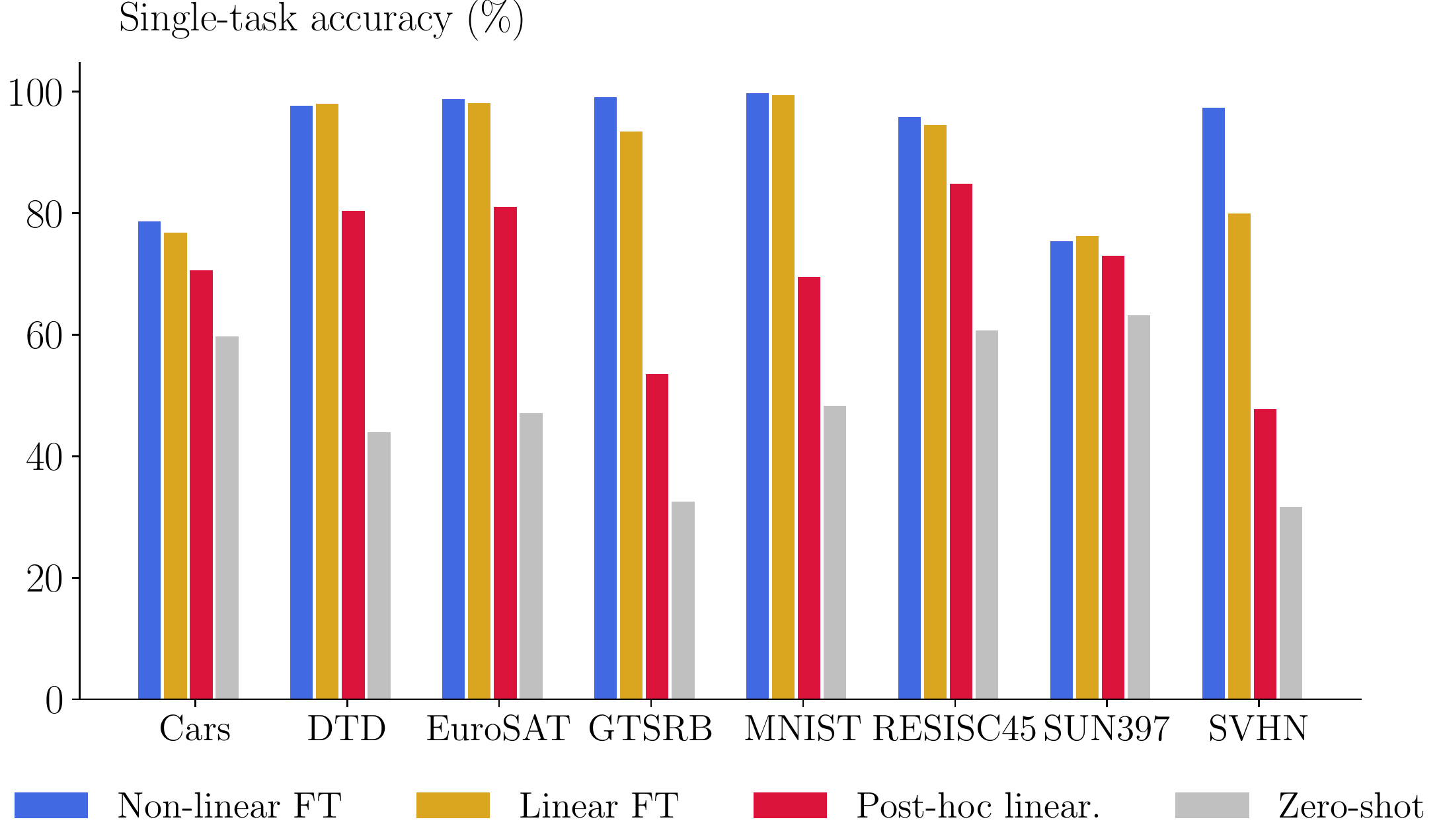}
    \caption{ViT-B/32}
\end{subfigure}
\begin{subfigure}{0.75\textwidth}
    \centering
    \includegraphics[width=\textwidth]{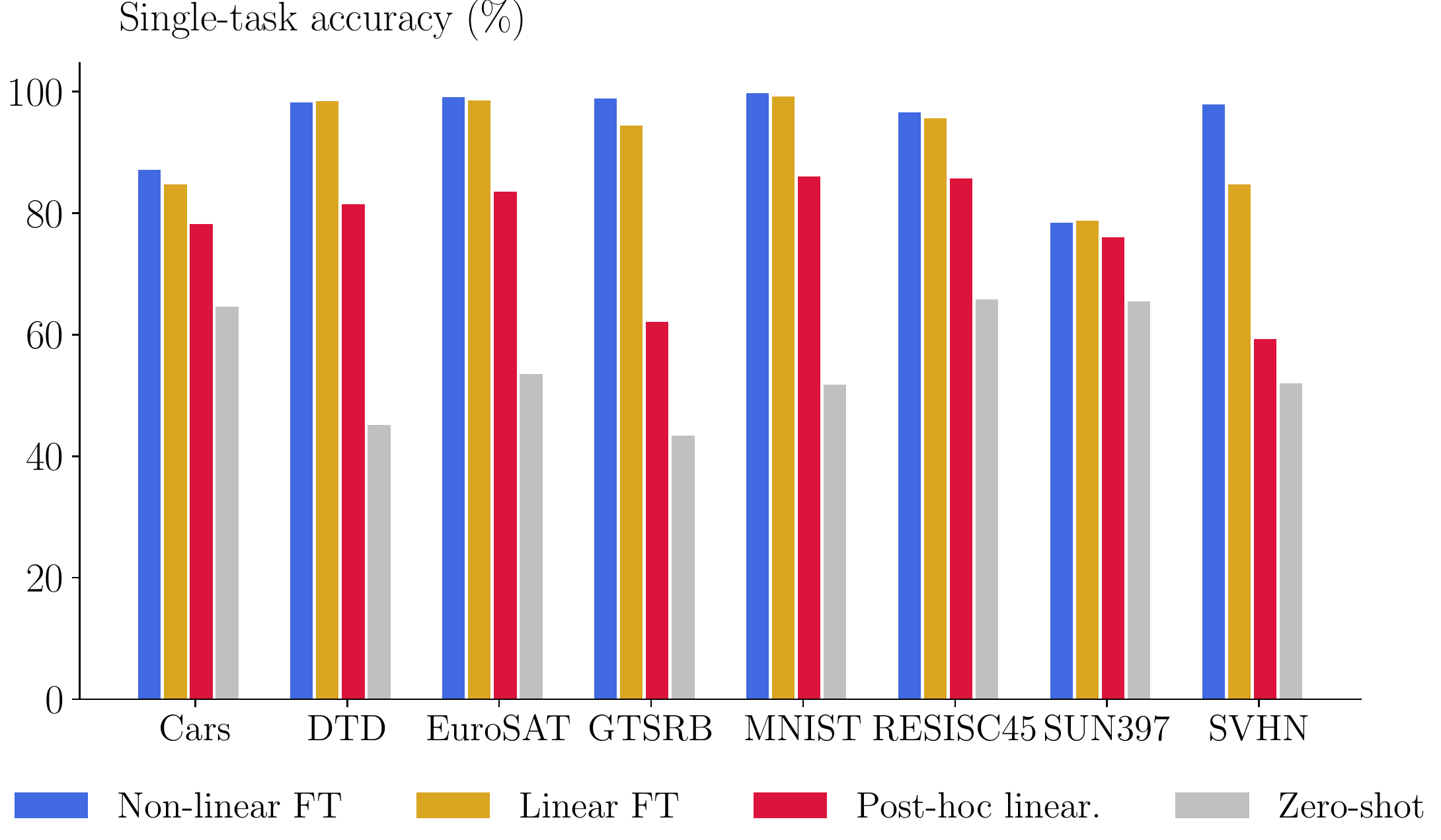}
    \caption{ViT-B/16}
\end{subfigure}
\begin{subfigure}{0.75\textwidth}
    \centering
    \includegraphics[width=\textwidth]{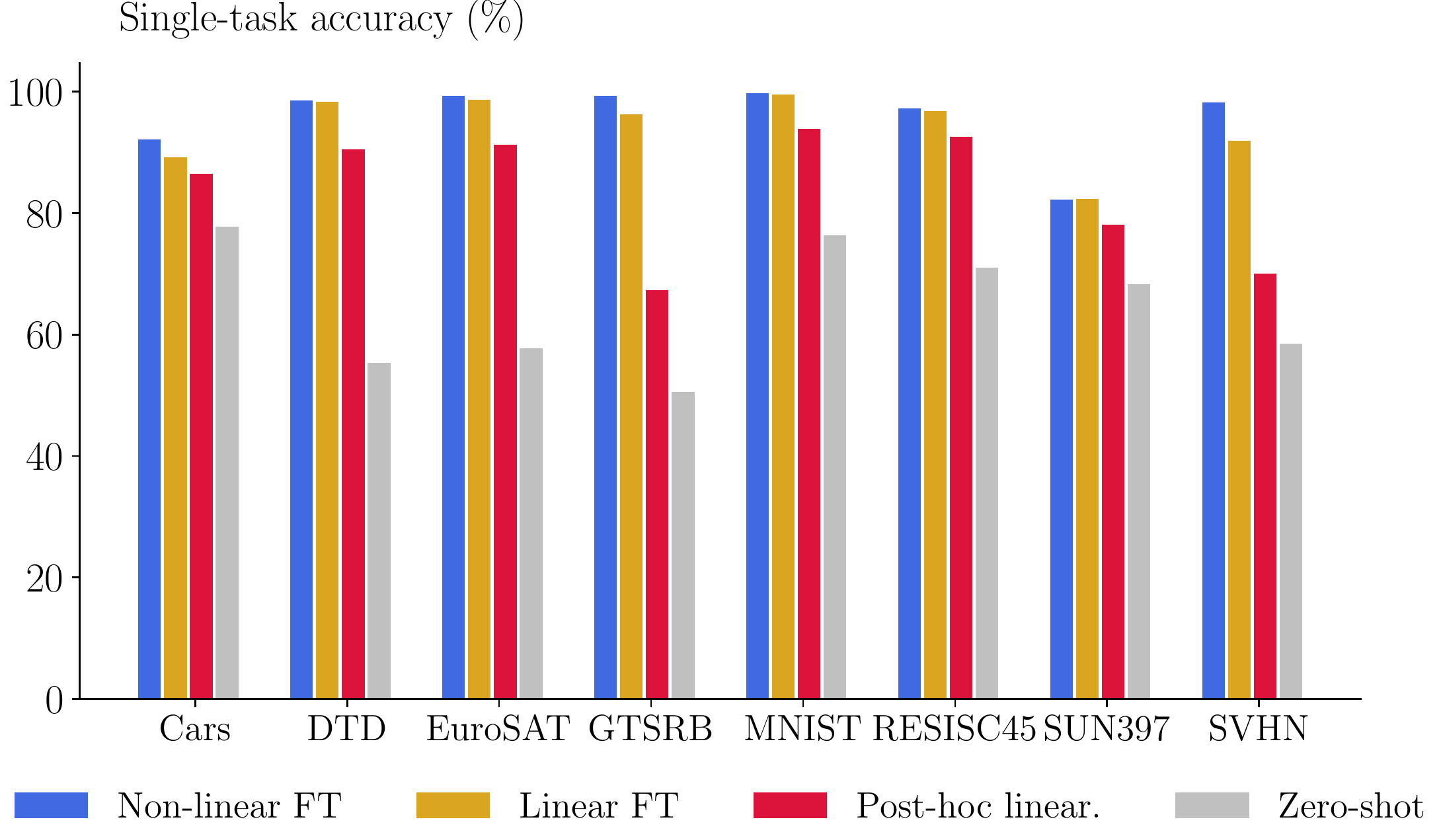}
    \caption{ViT-L/14}
\end{subfigure}
\caption{\textbf{Single-task accuracies (CLIP).} Accuracy of different models obtained using different strategies on each of the tasks.}
\label{fig:single_task_accuracies}
\end{figure}

\clearpage

\subsection{Fine-tuning accuracies}

In \Cref{fig:single_task_accuracies}, we report the single-task accuracies achieved by different CLIP models before fine-tuning (referred to as \textit{zero-shot}), after fine-tuning with different dynamics (referred to as \textit{non-linear FT} and \textit{linear FT}), and after linearizing the non-linearly fine-tuned models (\textit{post-hoc linearization}).\looseness=-1 

These results demonstrate that non-linear fine-tuning consistently achieves the highest accuracy, indicating a \textit{non-linear advantage}. However, an interesting observation is that the gap between non-linear, linear, and post-hoc linearized models diminishes as the model size increases. This trend can be explained by the fact that larger models, which are more over-parameterized, inherently induce a stronger kernel behavior during fine-tuning. As a result. they tend to stay closer to the NTK approximation, closing the gap with linearized models.\looseness=-1

\subsection{Detailed results on task addition}\label{ap:addition}

In addition to the results presented in \Cref{tab:task_addition} in the main text, we report in \Cref{fig:radial} the absolute accuracies of different CLIP models on the single tasks before (\textit{zero-shot}) and after performing task addition with different strategies (\textit{non-linear fine-tuning}, \textit{post-hoc linearization}, and \textit{linear fine-tuning}).\looseness=-1

\begin{figure}[ht]
    \centering
    \includegraphics[width=0.95\textwidth]{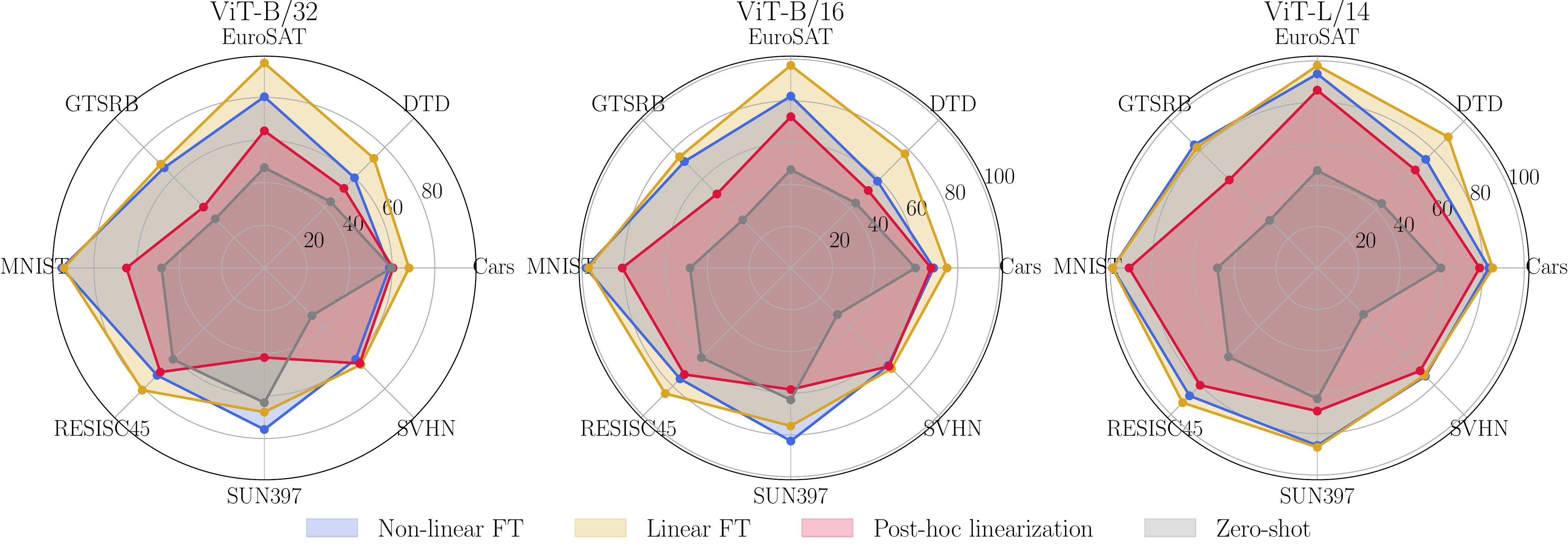}
    \caption{\textbf{Task addition performance.} Absolute accuracy (\%) of each task after performing task addition with different linear/non-linear strategies and over different CLIP ViT models.}
    \label{fig:radial}
\end{figure}

For all models and all datasets, except SUN397 \cite{sun397}, we observe that linearized task arithmetic achieves the highest accuracies. Interestingly, as commented in the main text, the gap in performance between linear and non-linear task vectors decreases while increasing the size of the model. This observation aligns with the previous observation that fine-tuning with larger models is better approximated by the NTK description.\looseness=-1

\clearpage

\subsection{Weight disentanglement and model scale}\label{ap:scale}

We note that weight disentanglement can also explain the increased performance of task arithmetic with model scale. As we see in \Cref{fig:disentanglement_scale}, the size of the areas with a low disentanglement error grows with the model scale. One plausible explanation for this is that larger models inherently induce a stronger kernel behavior during fine-tuning. Namely, since the models have more parameters, each parameter has to change less to fit the training examples. As a result, they tend to stay closer to the NTK approximation, closing the gap with linearized models and taking benefit of the better weight disentanglement of the models lying in the tangent space.

\begin{figure}[ht]
    \centering
    \includegraphics[width=\textwidth]{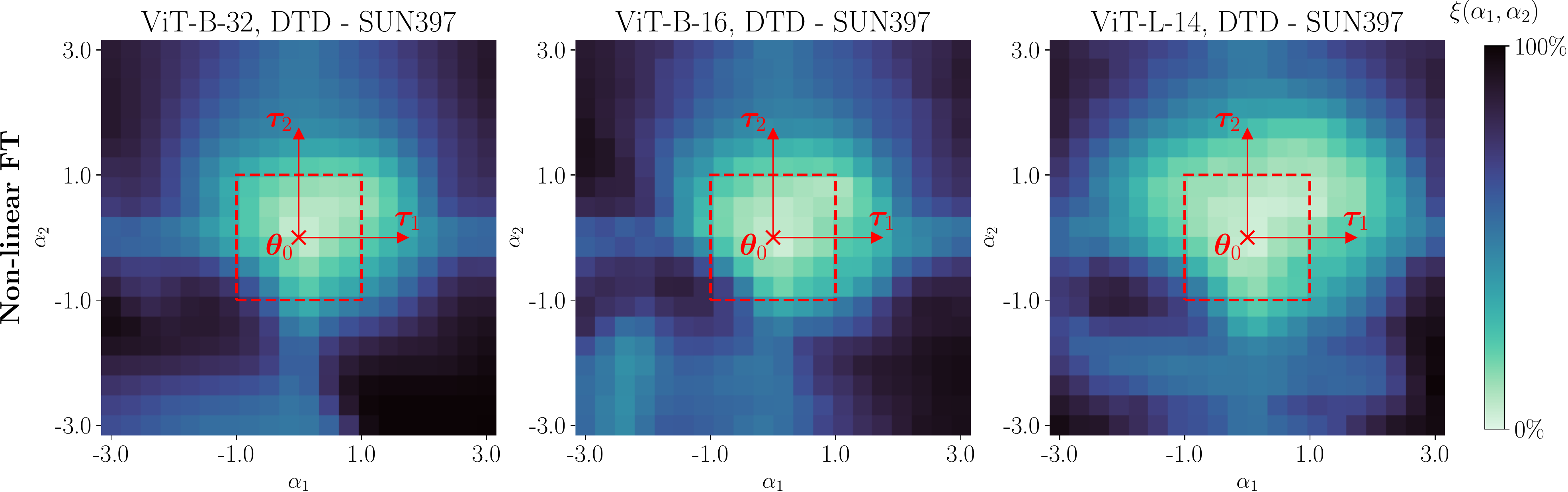}
    \includegraphics[width=\textwidth]{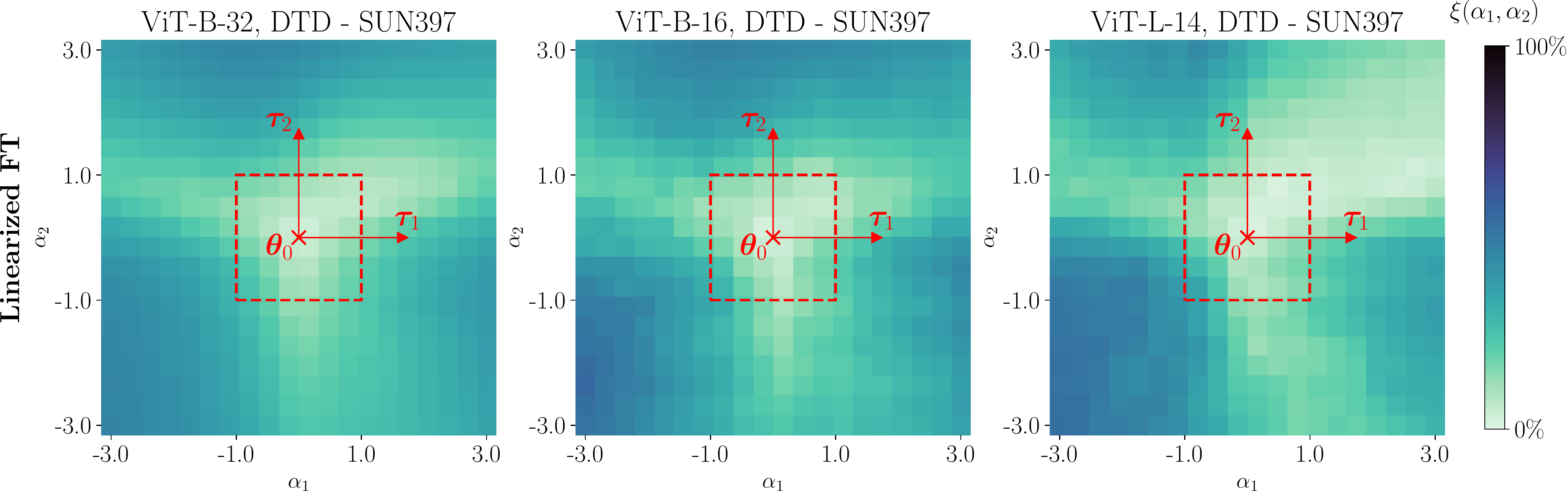}
    \caption{\textbf{Weight disentanglement and model scale.} The heatmaps show the disentanglement error $\xi(\alpha_1, \alpha_2)$ of different non-linear CLIP ViTs (top) and their post-hoc linearizations (bottom) on DTD and SUN397. The light regions denote areas of the weight space where weight disentanglement is stronger. The red box delimits the search space used to compute the best $\alpha$ in all our experiments.} \label{fig:disentanglement_scale}
    \vspace{-1em}
\end{figure}

\clearpage
\subsection{Weight disentanglement of linearized and random models}\label{ap:disentanglement_linear}

In \Cref{fig:init_consistency}, we present the disentanglement error of a linearized CLIP ViT-B/32 model across three different dataset pairs. By comparing these results with \Cref{fig:heatmaps}, we can clearly observe that linearized models exhibit significantly more weight disentanglement compared to their non-linear counterparts, similar to the findings obtained for post-hoc linearization.\looseness=-1

Conversely, in \Cref{fig:random_consistency}, we showcase the disentanglement error of a CLIP ViT-B/32 model that was non-linearly fine-tuned starting from a random initialization. In all panels, we observe a high disentanglement error, which supports the claim that weight disentanglement and, consequently, task arithmetic are emergent properties of pre-training.\looseness=-1

\begin{figure}[ht!]
    \centering
    \includegraphics[width=\textwidth]{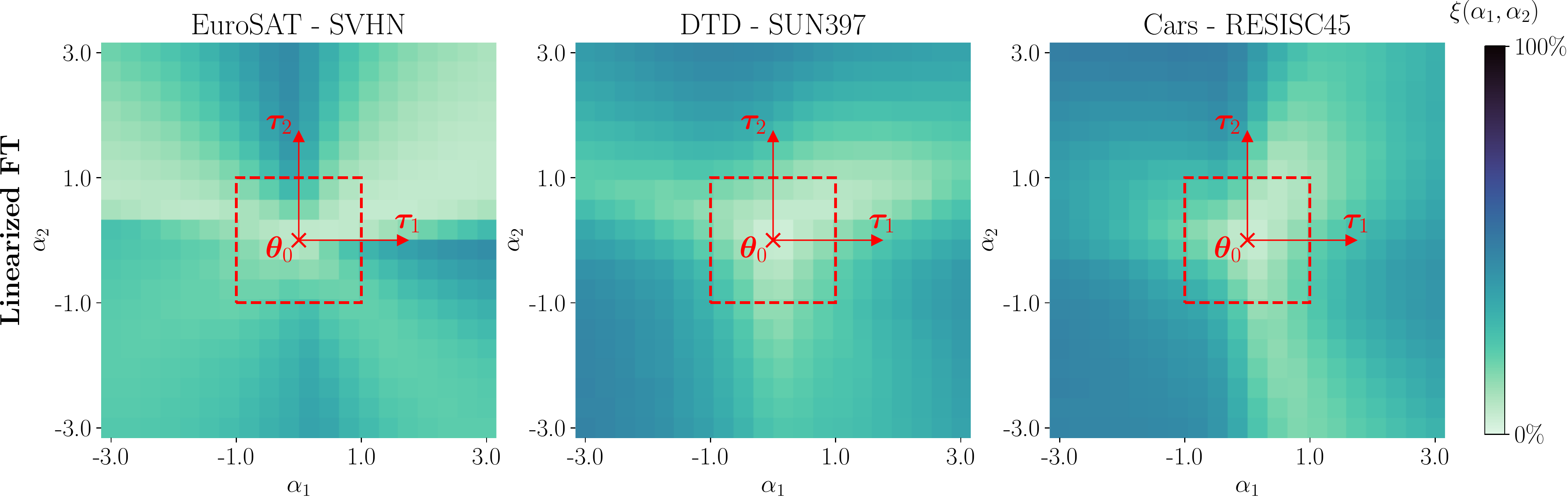}
    \caption{\textbf{Visualization of weight disentanglement from linearized models.} The heatmaps show the disentanglement error $\xi(\alpha_1, \alpha_2)$ of a ViT-B/32 linearly fine-tuned on different example task pairs. The light regions denote areas of the weight space where weight disentanglement is stronger. The red box delimits the search space used to compute $\alpha$ in our experiments.}
    \label{fig:init_consistency}
    \vspace{-1em}
\end{figure}

\begin{figure}[ht!]
    \centering
    \includegraphics[width=\textwidth]{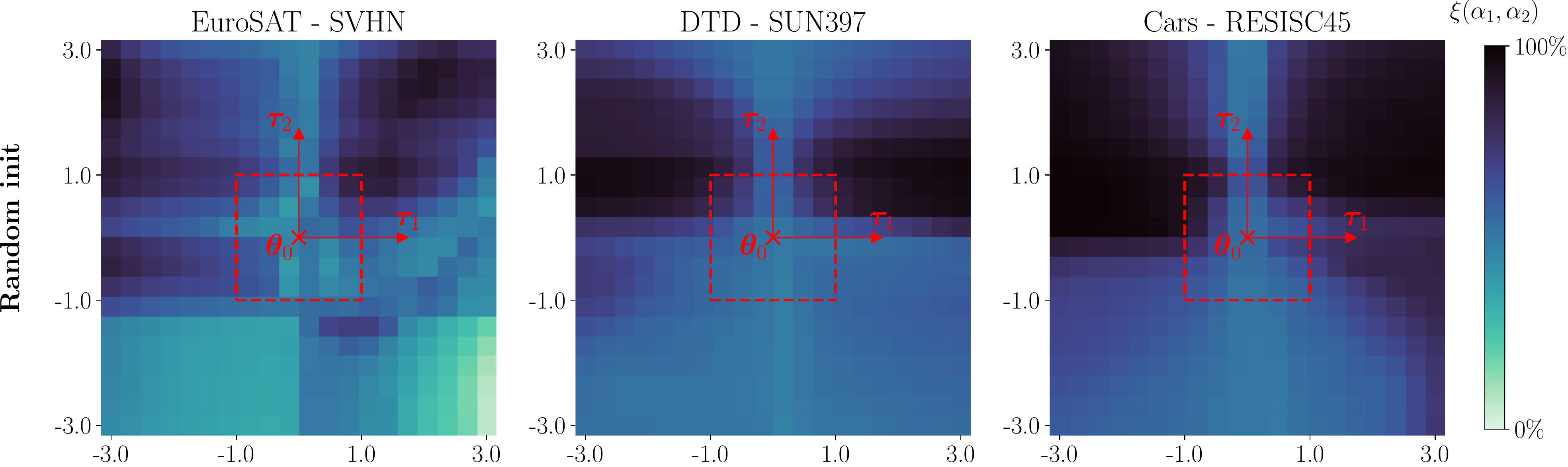}
    \caption{\textbf{Visualization of weight disentanglement from random initialization.} The heatmaps show the disentanglement error $\xi(\alpha_1, \alpha_2)$ of a ViT-B/32 fine-tuned from a random initialization on different example task pairs. The light regions denote areas of the weight space where weight disentanglement is stronger. The red box delimits the search space used to compute $\alpha$ in our experiments.\looseness=-1}
    \label{fig:random_consistency}
\end{figure}

\clearpage
\subsection{Task arithmetic with a convolutional architecture}\label{ap:cnn}

We replicate our task addition experiments using a convolutional architecture rather than a ViT. Specifically, we finetune a ConvNeXt~\citep{liu2022convnet} pre-trained on LAION-400M using CLIP~\citep{laion} on the 8 tasks from our task addition benchmark. We observe that also for this architecture linearized fine-tuning improves task arithmetic performance (see \Cref{tab:convnext}).

\begin{table}[ht!]
\caption{\textbf{Task addition with a CNN.} Average absolute (\%) and normalized accuracies (\%) of a CLIP ConvNeXt edited by adding the sum of the task vectors of $8$ tasks. We report results for the non-linear and linearized models normalizing performance by their single-task accuracies.\looseness=-1}\label{tab:convnext}

\vspace{0.5em}
\centering
\begin{tabular}{@{}lcc@{}}
\toprule
\multirow{2}{*}{Method} & \multicolumn{2}{c}{ConvNeXt}\\
                 & Abs. {\small ($\uparrow$)} & Norm. {\small ($\uparrow$)}  \\ \midrule
Pre-trained {\small \hspace{2.2em}$f(\vth^{\mathrm rd}_0)$}     & 57.5     &  --        \\ \midrule
Non-lin. FT {\small \hspace{0.4em}$f(\vth^{\mathrm rd}_0+\bm{\tau}^{\mathrm rd})$}    & 79.1 & 83.6 \\
Linear. FT  {\small \hspace{0.3em}$f_{\mathrm lin}(\vth_{0}^{\mathrm rd}+\bm{\tau}_{\mathrm lin}^{\mathrm rd})$}   & \textbf{81.1} & \textbf{85.7}   \\ \bottomrule
\end{tabular}
\end{table}

\subsection{Task arithmetic with closed vocabulary models}\label{ap:closed}

We also replicate our task addition experiments using a ViT-B/16 pre-trained on ImageNet-1k using standard supervised learning. Note that this is a closed vocabulary model, and therefore, when fine-tuning on the different tasks we need to also fine-tune a randomly initialized head. When interpolating the different task vectors we freeze the resulting head to arrive at the final models.

Interestingly, we find that this closed-vocabulary model can do task arithmetic, albeit at a lower performance (both in terms of absolute and normalized accuracy) than the open vocabulary ones (see \Cref{tab:closed}). Remarkably, linearized fine-tuning also enhances disentanglement in this model -- it yields higher normalized accuracies -- but does not perform better than non-linear task addition due to its lower single-task performance.

\begin{table}[ht!]
\caption{\textbf{Task addition with a closed vocabulary model.} Average absolute (\%) and normalized accuracies (\%) of a ViT-B/16 pre-trained on ImageNet-1k edited by adding the sum of the task vectors of $8$ tasks. We report results for the non-linear and linearized models normalizing performance by their single-task accuracies.\looseness=-1}\label{tab:closed}

\vspace{0.5em}
\centering
\begin{tabular}{@{}lcc@{}}
\toprule
\multirow{2}{*}{Method} & \multicolumn{2}{c}{ViT-B/16 (supervised)}\\
                 & Abs. {\small ($\uparrow$)} & Norm. {\small ($\uparrow$)}  \\ \midrule
Pre-trained {\small \hspace{2.2em}$f(\vth^{\mathrm rd}_0)$}     & 2.3     &  --        \\ \midrule
Non-lin. FT {\small \hspace{0.4em}$f(\vth^{\mathrm rd}_0+\bm{\tau}^{\mathrm rd})$}    & \textbf{54.9} & 52.2 \\
Linear. FT  {\small \hspace{0.3em}$f_{\mathrm lin}(\vth_{0}^{\mathrm rd}+\bm{\tau}_{\mathrm lin}^{\mathrm rd})$}   & 41.8 & \textbf{66.1}   \\ \bottomrule
\end{tabular}
\end{table}

\subsection{Weight disentanglement in other architectures and modalities}\label{ap:other}

To substantiate the generality of weight disentanglement, we conduct a new experiment on a pre-trained T5-Base model \cite{colin2020exploring} from Hugging Face Hub \cite{wolf2019huggingface}, fine-tuned on two benchmark Natural Language Processing tasks, i.e., sentiment analysis on movie reviews with the IMDB dataset \cite{maas2011imdb} and question answering with the QASC dataset \cite{allenai:qasc}. The results, illustrated in the right panel in \Cref{fig:heatmap_other}, show a notable region around the pre-trained checkpoint characterized by low disentanglement error. This finding echoes the ability of T5 to perform task arithmetic as demonstrated in \citet{ilharco2023task}, thereby reinforcing the robustness of our conclusions.

\begin{figure}[ht]
    \centering
    \begin{subfigure}[b]{0.6\textwidth}
    \includegraphics[width=\textwidth]{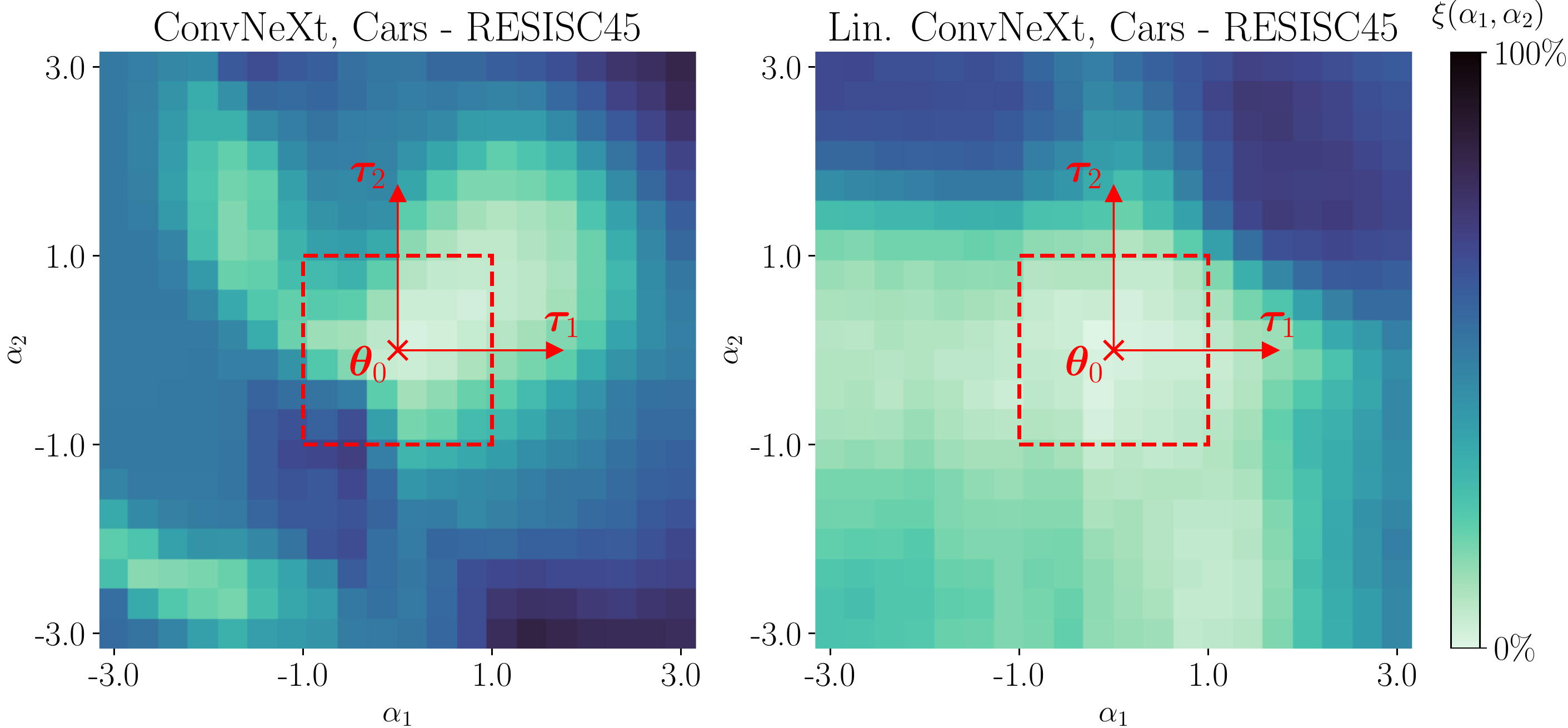}
    \end{subfigure}
    \hspace{0.6cm}
    \begin{subfigure}[b]{0.33\textwidth}
    \includegraphics[width=\textwidth]{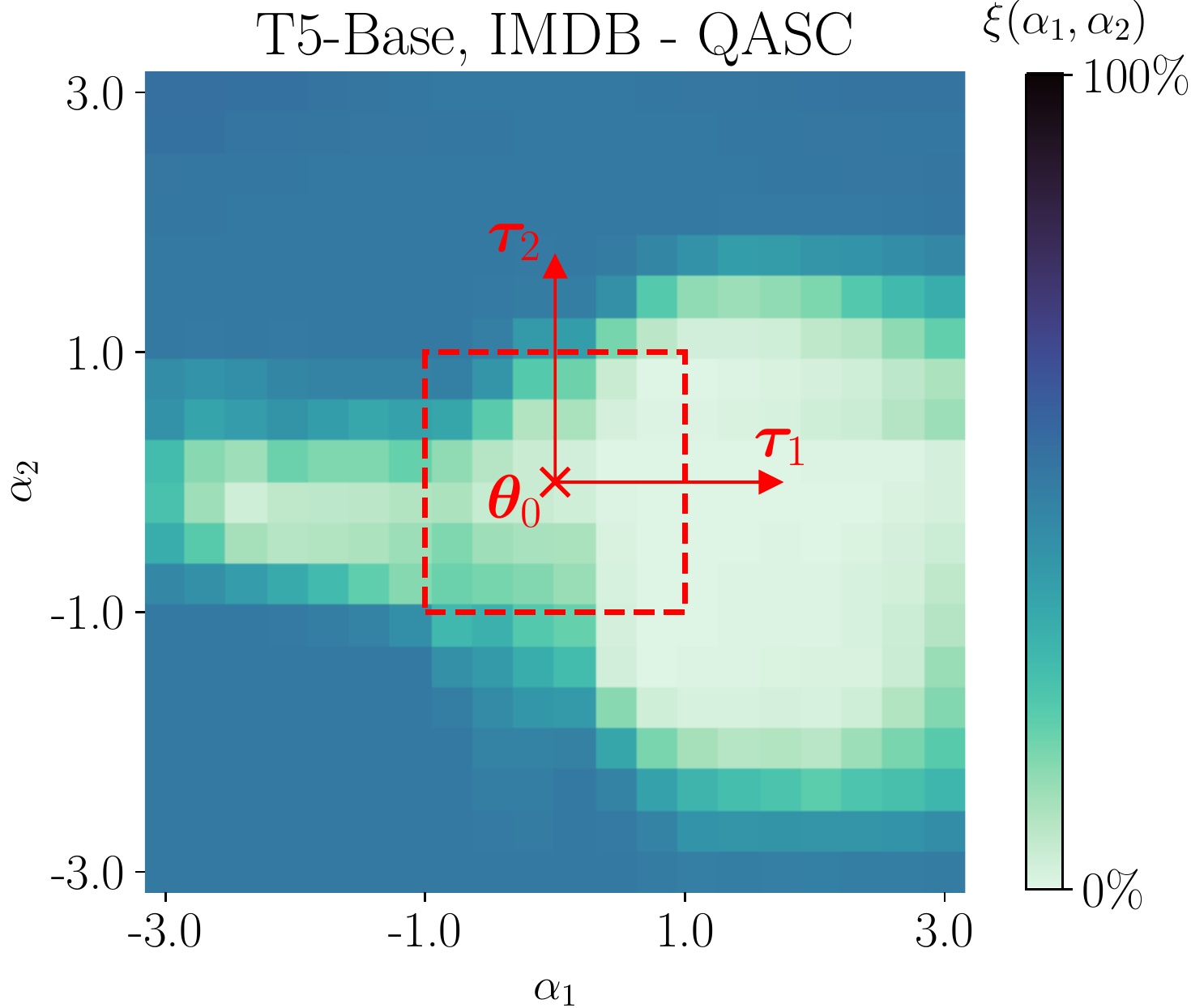}
    \end{subfigure}
    \caption{\textbf{Visualization of weight disentanglement for other architectures and modalities.} The heatmaps show the disentanglement $\xi(\alpha_1, \alpha_2)$ of a non-linearly and linearly fine-tuned ConvNeXt on a pair of vision tasks (two left panels) and a T5-Base model fine-tuned on a pair of NLP tasks (right).\looseness=-1}
    \label{fig:heatmap_other}
\end{figure}

Similarly, in \Cref{fig:heatmap_other} we also see that the tangent space of CLIP ConvNeXt models is also more disentangled than the non-linear function space around the pre-trained initialization. The same findings also apply to closed vocabulary models, as shown in \Cref{fig:heatmap_closed}.

\begin{figure}[ht]
    \centering
    \includegraphics[width=\textwidth]{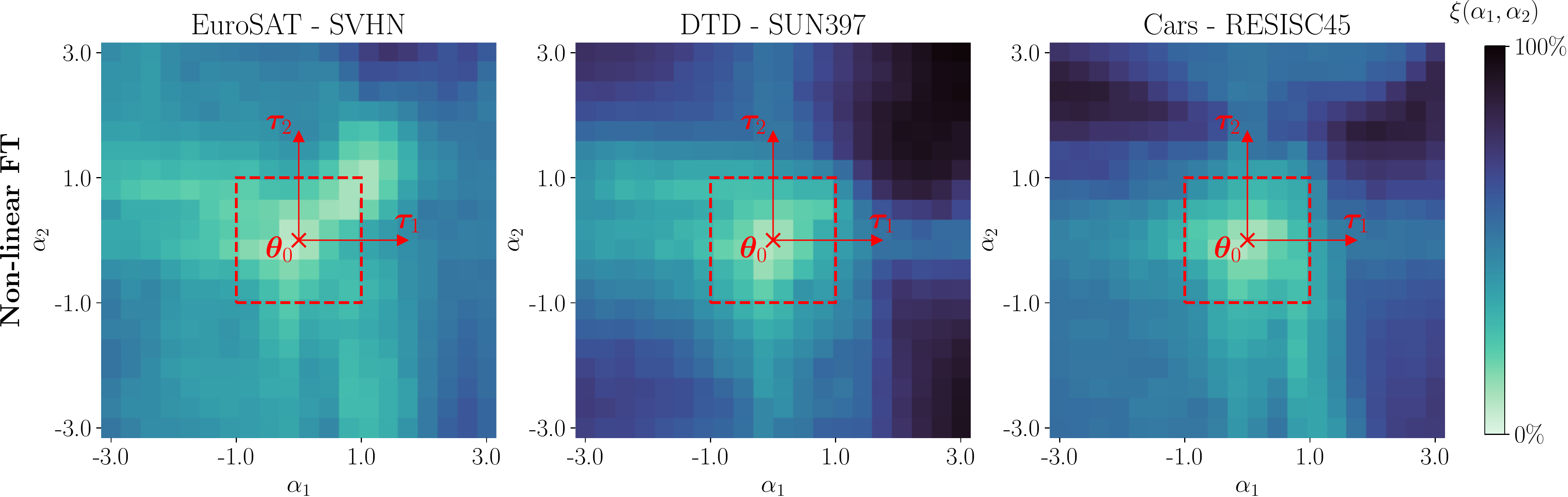}
    \includegraphics[width=\textwidth]{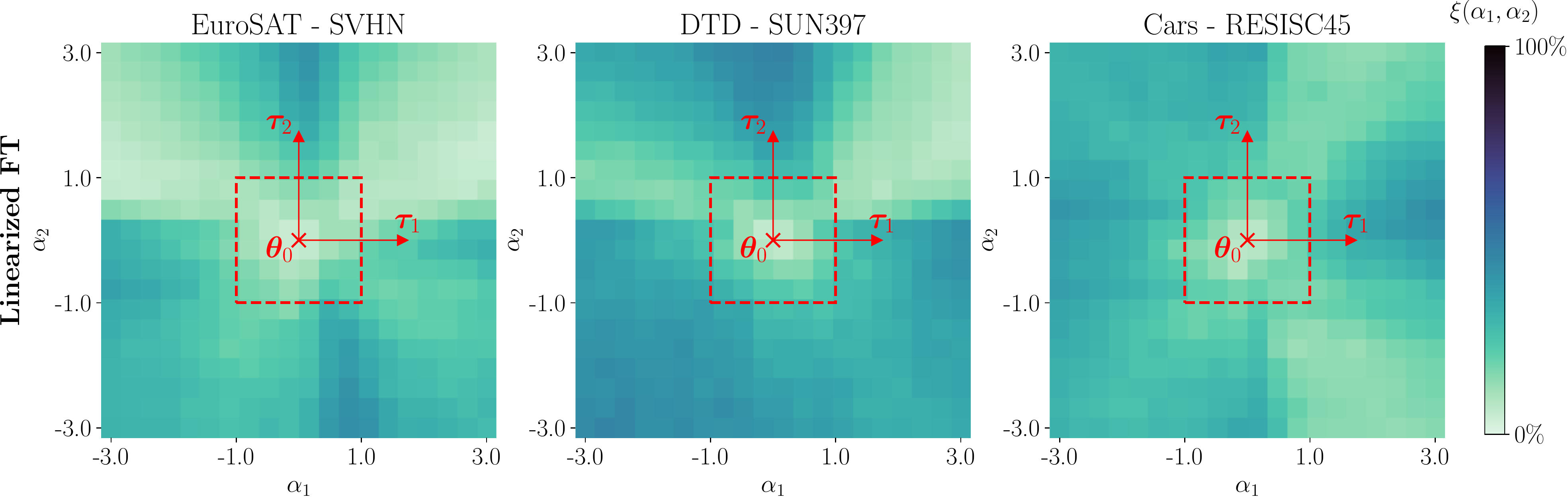}
    \caption{\textbf{Weight disentanglement of closed vocabulary models.} The heatmaps show the disentanglement error $\xi(\alpha_1, \alpha_2)$ of a ViT-B/16 model pre-trained on ImageNet-1k using supervised training in the non-linear function space (top) and its linearizations (bottom) on different task pairs. The light regions denote areas of the weight space where weight disentanglement is stronger. The red box delimits the search space used to compute the best $\alpha$ in all our experiments.} \label{fig:heatmap_closed}
    \vspace{-1em}
\end{figure}

\subsection{Localization of eigenfunctions of CLIP's NTK}\label{ap:local_energy}

In \Cref{fig:eigenfunctions_all}, we plot the local energy of the NTK eigenfunctions for a pre-trained CLIP ViT-B/32 model evaluated on three different data supports and control data supports. These panels complement the information presented in \Cref{fig:localization} in the main text, where we observed that the CLIP has eigenfunctions whose energy is concentrated on points belonging to the respective dataset.\looseness=-1

\begin{figure}[ht!]
    \centering
    \includegraphics[width=\textwidth]{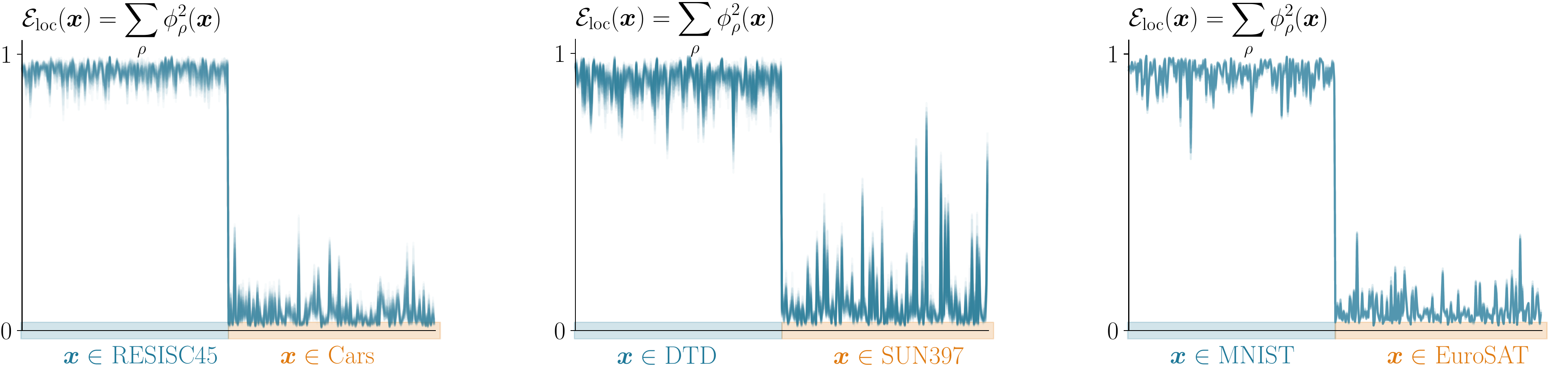}
    \caption{\textbf{Eigenfunction localization.} Estimated support of the eigenfunctions of the NTK of a ViT-B/32 CLIP model trained on different datasets. The plot shows the sum of the local energy of the eigenfunctions over a random subset of the training and control supports} \label{fig:eigenfunctions_all}
\end{figure}

In \Cref{fig:eigenfunctions_init}, we extend this analysis to a randomly-initialized CLIP ViT-B/32 model. In all panels, we observe a non-trivial but considerably poor degree of eigenfunction localization. This observation aligns with the finding that randomly-initialized linearized models cannot perform task arithmetic. Indeed, as we showed in the previous subsection, in this case the model's weights are not effectively disentangled, hindering its ability to perform task arithmetic operations. In summary, eigenfunction localization offers a complementary perspective on the limitations of randomly-initialized models.\looseness=-1

\begin{figure}[ht]
    \centering
    \includegraphics[width=\textwidth]{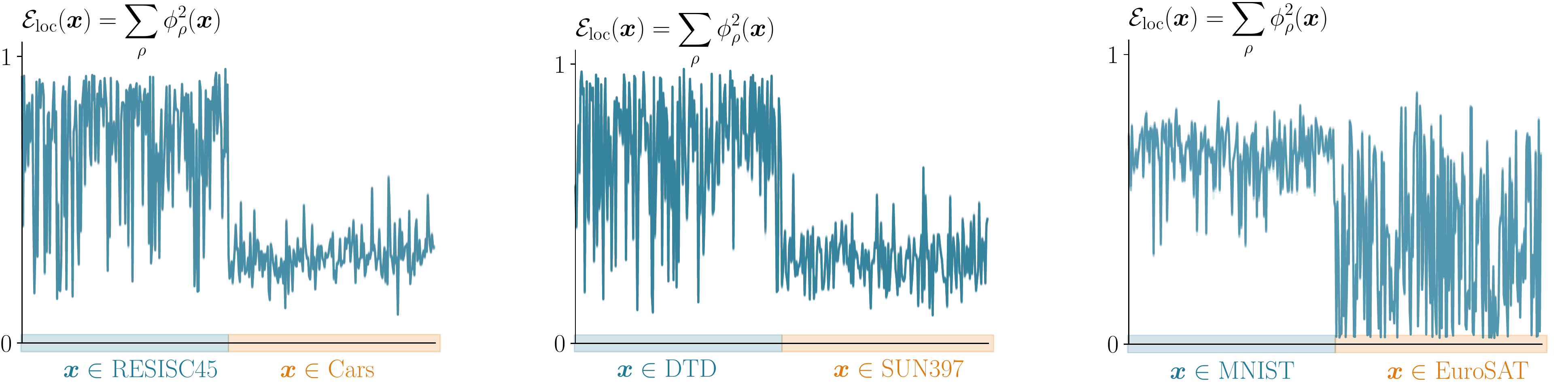}
    \caption{\textbf{Eigenfunction localization.} Estimated support of the eigenfunctions of the NTK of a randomly initialized ViT-B/32 model trained on different datasets. The plot shows the sum of the local energy of the eigenfunctions over a random subset of the training and control supports} \label{fig:eigenfunctions_init}
\end{figure}

\subsection{Further experiments with randomly-initialized networks}\label{ap:random}

We conclude by showing, in \Cref{fig:rand_init_single_acc}, the absolute single-task accuracy achieved by different CLIP ViT models that were fine-tuned from a random initialization. Both the base models achieve non-trivial or moderate accuracy on the majority of benchmark tasks, using both non-linear and linearized fine-tuning dynamics.\looseness=-1 

These findings reinforce the intuition that non-pretrained models are not failing in task arithmetic due to their inability to learn the task initially. Instead, as argued earlier, the primary reason for the failure of non-pre-trained models in task arithmetic is their lack of weight disentanglement.\looseness=-1

Interestingly, the performance of the randomly-initialized large model is generally poorer compared to the base models. This observation can be attributed to the models' tendency to overfit the training data, which is more likely to occur when a model has a larger capacity.\looseness=-1

\begin{figure}[ht]
\centering
\begin{subfigure}{0.8\textwidth}
    \centering
    \includegraphics[width=\textwidth]{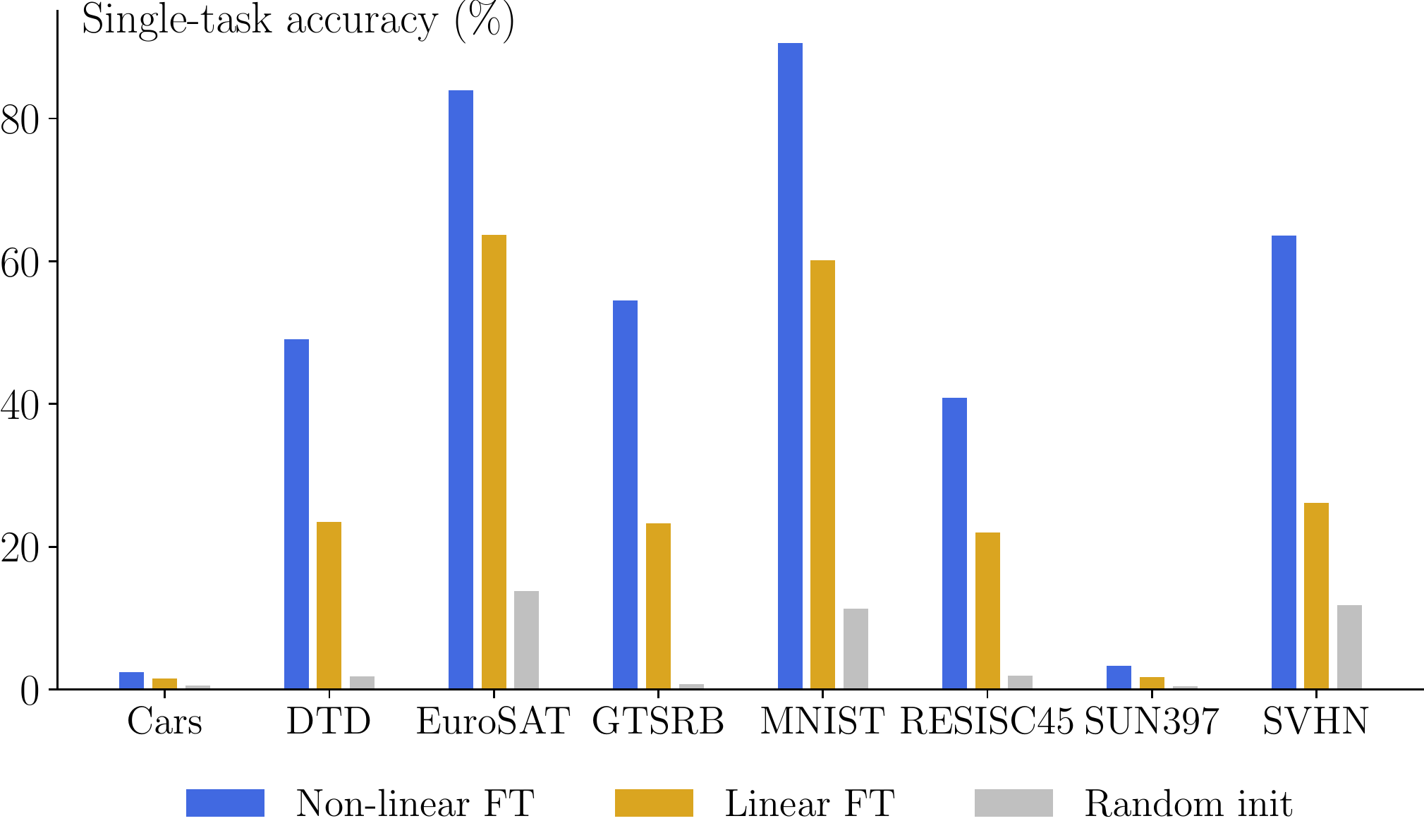}
    \caption{ViT-B/32}
\end{subfigure}
\begin{subfigure}{0.8\textwidth}
    \centering
    \includegraphics[width=\textwidth]{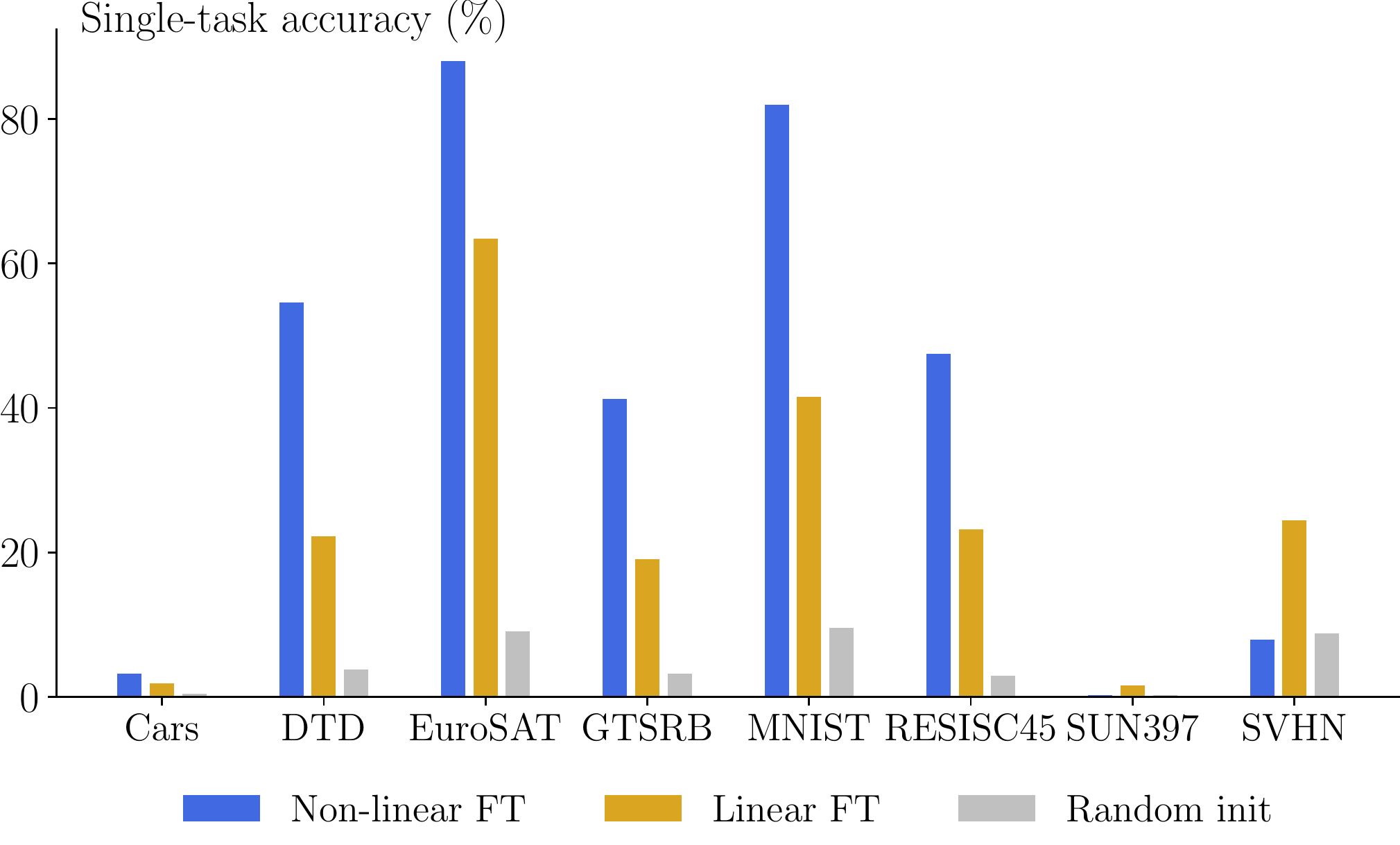}
    \caption{ViT-B/16}
\end{subfigure}
\begin{subfigure}{0.8\textwidth}
    \centering
    \includegraphics[width=\textwidth]{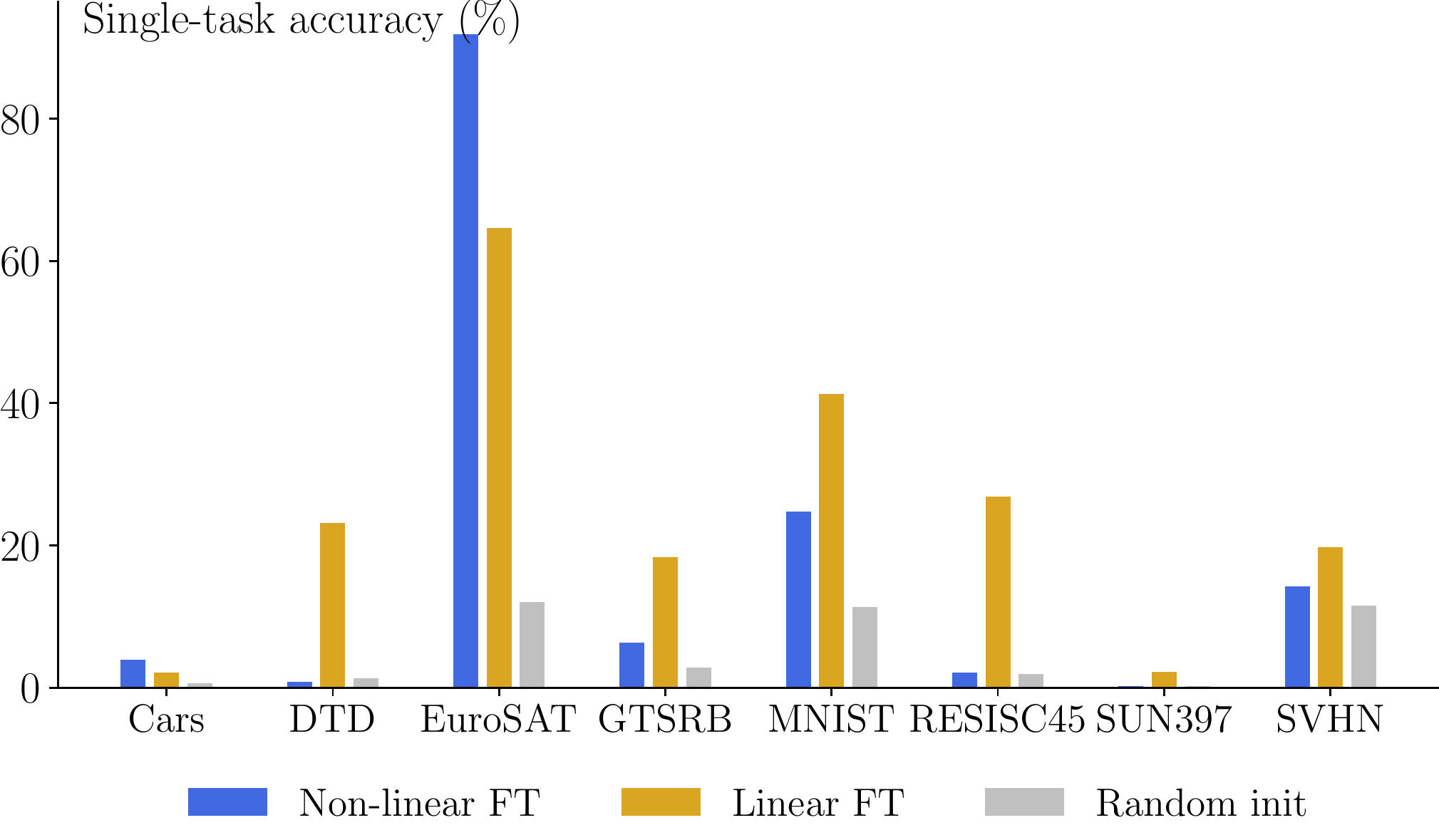}
    \caption{ViT-L/14}
\end{subfigure}
\caption{\textbf{Single-task accuracies (random init).} Accuracy of different models obtained using different strategies on each of the tasks.}
\label{fig:rand_init_single_acc}
\end{figure}

\end{document}